\documentclass{article}

\PassOptionsToPackage{numbers}{natbib}

\usepackage[preprint]{neurips_2025}

\usepackage[utf8]{inputenc} %
\usepackage[T1]{fontenc}    %
\usepackage{hyperref}       %
\usepackage{url}            %
\usepackage{booktabs}       %
\usepackage{amsfonts}       %
\usepackage{nicefrac}       %
\usepackage{microtype}      %
\usepackage[dvipsnames]{xcolor}
\usepackage{graphicx} 
\usepackage{subcaption} 
\usepackage{amsmath,bm}
\usepackage{amssymb,amsthm}
\usepackage{subcaption}
\usepackage{wrapfig}
\usepackage{caption}

\usepackage{cleveref}

\usepackage{multirow}

\usepackage{floatrow}
\usepackage{booktabs}

\newtheorem{proposition}{Proposition}

\newcommand{\myparagraph}[1]{\textbf{#1}}

\usepackage{longtable}

\newtheorem*{theorem*}{Theorem}
\newtheorem*{proposition*}{Proposition}
\newtheorem*{assumption*}{Assumption}

\title{Time Series Representations for Classification \\Lie Hidden in Pretrained Vision Transformers}

\author{
Simon Roschmann$^{1,2,3,4}$ \quad Quentin Bouniot$^{1,2,3,4}$ \quad Vasilii Feofanov$^{5}$ \\
\textbf{Ievgen Redko}$^5$ \quad \textbf{Zeynep Akata}$^{1,2,3,4}$\\
\\
$^1$Helmholtz Munich \quad
$^2$Technical University of Munich \\
$^3$Munich Center for Machine Learning \quad
$^4$MDSI \quad
$^5$Paris Noah’s Ark Lab\\
\\
\texttt{\{simon.roschmann,quention.bouniot,zeynep.akata\}@tum.de} \\ 
\texttt{\{vasilii.feofanov,ievgen.redko\}@huawei.com}\\
}

\begin{document}

\maketitle
\allowdisplaybreaks

\begin{abstract}
Time series classification is a fundamental task in healthcare and industry, yet the development of time series foundation models (TSFMs) remains limited by the scarcity of publicly available time series datasets. In this work, we propose \textbf{Ti}me \textbf{Vi}sion \textbf{T}ransformer (\textbf{TiViT}), a framework that converts time series into images to leverage the representational power of frozen Vision Transformers~(ViTs) pretrained on large-scale image datasets. First, we theoretically motivate our approach by analyzing the 2D patching of ViTs for time series, showing that it can increase the number of label-relevant tokens and reduce the sample complexity. Second, we empirically demonstrate that TiViT achieves state-of-the-art performance on standard time series classification benchmarks by utilizing the hidden representations of large OpenCLIP models.
We explore the structure of TiViT representations and find that intermediate layers with high intrinsic dimension are the most effective for time series classification. Finally, we assess the alignment between TiViT and TSFM representation spaces and identify a strong complementarity, with further performance gains achieved by combining their features. Our findings reveal a new direction for reusing vision representations in a non-visual domain. Code is available at \href{https://github.com/ExplainableML/TiViT}{https://github.com/ExplainableML/TiViT}.
\end{abstract}

\section{Introduction}
Foundation models have disrupted the field of machine learning. Typically built upon the Transformer~\cite{vaswani2017attention} architecture, they are trained on large-scale datasets to learn generalizable representations for a wide range of downstream tasks. 
In the vision domain, foundation models like DINOv2~\cite{oquab2024dinov2}, trained with a self-supervised objective of contrastive learning and masked modeling, yield representations that can be applied in image classification or segmentation with minimal supervision. Vision language models (VLMs) such as CLIP~\cite{radford2021clip} or SigLIP~\cite{ tschannen2025siglip2,zhai2023siglip} can even be transferred to new tasks without any supervision since they have learned from large-scale image-text datasets to leverage natural language as a flexible anchor for semantic concepts. VLMs have been increasingly applied in modalities beyond computer vision, such as audio \citep{dixit2024visionlanguagemodelsfewshot,xie2024sonicvisionlmplayingsoundvision} and medicine \citep{zhang2023biomedgpt}. %

Time series data is critically important across a wide range of domains, including healthcare, transportation, and manufacturing. Inspired by the success of foundation models in natural language processing (NLP) and computer vision, similar models have recently been developed for the analysis of time series following two different approaches. The first one is to pretrain time series foundation models (TSFMs) in a self-supervised way \citep{ansari2024chronos,das2023decoder,feofanov2025mantis,goswami2024moment,lin2023nutime} using a large-scale real-world time series dataset. The second one is to repurpose powerful foundation models from other domains, such as NLP \citep{jin2023time,zhou2023one} and vision \citep{chen2024visionts,li2023time}, for time series tasks. The idea behind this approach is to benefit from the vast amount of samples that large vision and language models are trained on and which are often unavailable in the time series domain. Despite the encouraging results of these latter approaches, they often remain inferior to TSFMs trained on large-scale collections of datasets, require costly fine-tuning, or are restricted to a specific task (such as, for instance, univariate forecasting \citep{chen2024visionts} or classification of irregularly sampled time series \citep{li2023time}). 

In this paper, we propose a first comprehensive study over a large collection of real-world time series datasets showing that pretrained vision models, such as DINOv2, CLIP, or SigLIP 2, can be on par with or even superior to frontier TSFMs in time series classification. %
Not only do our results significantly extend the scope of prior contributions using pretrained vision models in time series analysis, but they also provide theoretical and qualitative analysis showing the benefits of image-based modeling of time series and its complementarity to existing TSFMs trained on common time series datasets.

\begin{figure}
\centering
\includegraphics[width=0.98\textwidth]{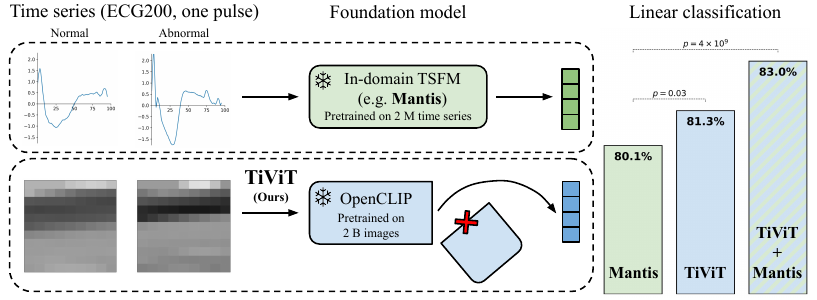}
\caption{While TSFMs such as Mantis operate directly on the 1D time series signal, TiViT transforms time series into images to leverage pretrained ViTs for feature extraction. 
We display the average time series signal of two classes from ECG200 \cite{ecg200} and their corresponding 2D representations. 
Utilizing the hidden representations of OpenCLIP, TiViT significantly outperforms Mantis in linear classification on the UCR benchmark. Combining both models further improves accuracy.}
\label{fig: teaser}
\end{figure}

Our main contributions can be summarized as follows:
(1) We show that pretrained vision foundation models can be superior to classification TSFMs without any fine-tuning. We achieve this by transforming time series into 2D images and by further using hidden layer representations of vision models for time series classification.
(2) We propose a theoretical insight showing that imaging-based time series modeling can be efficient when used with Transformers since it reduces sample complexity during training. 
(3) We show that representations from TSFMs and ViTs can be concatenated to provide a further impressive average improvement of +3\% on 128 UCR time series datasets, highlighting the complementarity of models extracting different information from the same data. We further study the alignment of TSFMs and ViTs confirming their complementarity.

\section{Related work}
\label{sec:related}
\myparagraph{Time series foundation models} Recently, the research community has witnessed an impressive surge in the number and variety of TSFMs. At first, such models were based on repurposing large language models (LLMs) for time series tasks \citep{cao2023tempo,chang2023llm4ts,gruver2024large,jin2023time,xue2023promptcast,zhou2023one} by leveraging the ability of LLMs to efficiently handle text sequences. A different approach that gained in popularity later was to train TSFMs from the ground up on extensive and diverse datasets \citep{ansari2024chronos,bhethanabhotla2024mamba4cast,das2023decoder,feofanov2025mantis,gao2024units,goswami2024moment,lin2023nutime,liu2024moirai,liu2024timer,rasul2023lag,wang2024rose}. While most of the models were designed for time series forecasting, several of them also specifically tackled time series classification \citep{feofanov2025mantis,gao2024units,goswami2024moment,lin2023nutime,zhou2023one}. These models are on par with or exceed the performance of other popular deep learning models proposed for time series classification, such as the famous TimesNet \citep{WuHLZ0L23} architecture.

\myparagraph{Transforming time series into images} Time series can be transformed into images in many ways, either based on the 1D representation of the time series in the original (line plot) or transformed (frequency) space, or by using a 2D modeling (heatmap, Gramian angular field, recurrence plot) that stacks segments of the input time series based on a chosen periodicity. Vision models, often based on CNNs and their variations, were used on such image-based representations of time series since as early as 2013 (see \citet{ni2025harnessingvisionmodelstime} for a recent survey). Most of them, however, are trained in a supervised way to fit a dataset at hand. This work explores how pretrained vision models can be used as powerful feature extractors without training or fine-tuning. \citet{li2023time} showed that pretrained ViTs can be efficient in the classification of irregular time series from their line plot representations after full fine-tuning. In a similar vein, \citet{chen2024visionts} applied a masked auto-encoder with a pretrained frozen ViT to 2D transformed time series to perform univariate time series forecasting. 
Different from these works, we explain why vision models can be more efficient in time series analysis compared to Vanilla Transformers. Moreover, our TiViT model surpasses the performance of frontier TSFMs across a broad set of common classification benchmarks.

\section{TiViT: Time series classification using pretrained Vision Transformers}
\label{sec:tivit}
We introduce TiViT (Time Vision Transformer) leveraging pretrained frozen ViTs from the vision or vision-language domain for time series classification. First, we theoretically motivate the 2D modeling of time series. %
Second, we detail how we transform time series into images and describe how pretrained ViTs can effectively extract features of these images.

\subsection{Theoretical motivation for 2D time series modeling} 
\label{subsection: theoretical motivation}
Although previous studies \cite{chen2024visionts, lin2024sparsetsf, WuHLZ0L23} have modeled time series as 2D matrices, there is no theoretical understanding of why such an approach may be beneficial in practice. Below, we develop a theoretical insight showing exactly how the 2D modeling of time series can improve the classification performance of Transformer-based models compared to conventional 1D patching.

\myparagraph{Problem setup} 
We consider a binary time series classification problem with \(N\) univariate training samples \(\{({\bm{t}}^n, y^n), y^n \in \{+1, -1\}\}_{n=1}^N\). Each time series $\bm{t}^n \in \mathbb{R}^T$ is characterized by a fixed periodicity~$p$. Without loss of generality, we assume $T=k^2$ for some $k \in \mathbb{N}$ and $p=k$. For the ease of derivations, we further assume that $k=g^2$ for some $g \in \mathbb{N}$, ensuring that $\sqrt{k}$ is an integer. Each time series can be patched in one of the two following ways:

\begin{figure}[!t]
    \centering
    \includegraphics[width=0.95\linewidth]{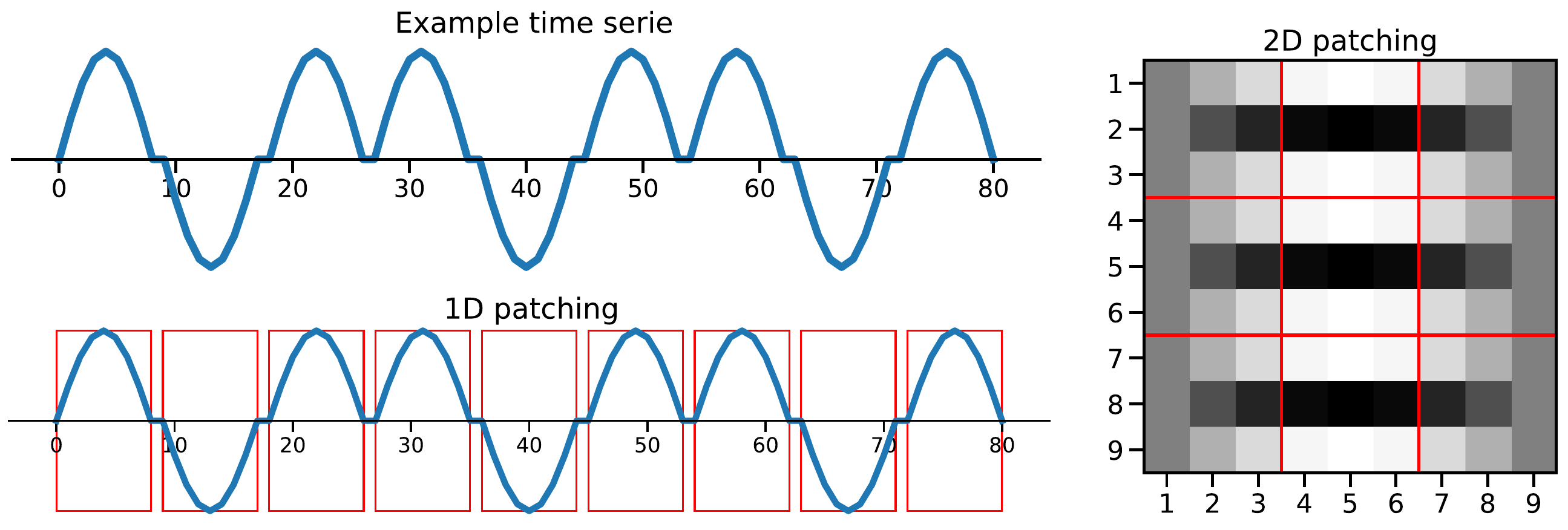}
    \caption{Benefits of 2D patching for time series. We consider a binary classification problem with two distinct patterns: a sine function over $[0, \pi]$, either positive or negative. Here, the negative sine function represents the label-relevant pattern. Tokens should cover at least $1 / \sqrt{k}$ of the label-relevant pattern to be considered label-relevant, i.e., all tokens in 2D (red), only one third of tokens in 1D.}%
    \label{fig:prop}
\end{figure}

\begin{itemize}
    \item \textbf{1D patching}: For a time series \( \bm{t}^n \in \mathbb{R}^T \), define \( \bm{X}^n \in \mathbb{R}^{1 \times T} \) as:
    \[
    \bm{X}^n = \begin{bmatrix}
    \bm{x}_1^n & \bm{x}_2^n & \cdots & \bm{x}_k^n
    \end{bmatrix}, \quad \text{where } \bm{x}_l^n = \bm{t}^n_{(l-1)k + 1 : lk} \in \mathbb{R}^k \ \forall l \in [k].
    \]
    Each token \( \bm{x}_l^n \) corresponds to a contiguous 1D segment of length \( k \).

    \item \textbf{2D patching}: Reshape \( \bm{t}^n \) into a \( k \times k \) matrix \( \bm{X}'^n \), then split it into \( k \) non-overlapping square patches of size \( \sqrt{k} \times \sqrt{k} \):
    \[
    \bm{X}'^n = 
    \begin{bmatrix}
    \bm{x}_1^n \\
    \bm{x}_2^n \\
    \vdots \\
    \bm{x}_k^n
    \end{bmatrix}
    = \begin{bmatrix}
    \bm{P}_{1,1} & \bm{P}_{1,2} & \cdots \\
    \bm{P}_{2,1} & \bm{P}_{2,2} & \cdots \\
    \vdots & \vdots & \ddots
    \end{bmatrix}, \quad \text{where } \bm{P}_{i,j} \in \mathbb{R}^{\sqrt{k} \times \sqrt{k}} \ \forall i,j \in [\sqrt{k}].
    \]
    Each 2D token \( \bm{P}_{i,j} \) is vectorized (flattened) into \( \bm{x}_{(i,j)}'^n \in \mathbb{R}^k \).
\end{itemize}

\myparagraph{Benefits of 2D patching} Our key idea is to leverage the notion of label-relevant tokens and their impact on the sample complexity of training Transformers, as introduced by \citet{li2023a}. Following their data model, we consider each token to be a noisy version of distinct patterns. In binary classification, there exist two such patterns \(\{\bm{\mu}_1, \bm{\mu}_2\}\), $\bm{\mu}_i \in \mathbb{R}^{k}, \forall\ i$. For a time series $\bm{t}^n$ with label $y^n = 1$, tokens $\bm{x}$ that are noisy $\bm{\mu}_1$, i.e., $||\bm{x}-\bm{\mu_1}||\leq ||\bm{x}-\bm{\mu_2}||$ are referred to as label-relevant tokens. Similarly, for a time series $\bm{t}^n$ with label $y^n = -1$, the label-relevant tokens are noisy versions of $\bm{\mu}_2$. The class prediction for \(y^n\) depends on a majority vote over tokens closest to \(\bm{\mu}_1/\bm{\mu}_2\).

\citet{li2023a} showed that the sample complexity of a shallow multi-head Transformer scales as $\mathcal{O}(1/\alpha_*^2)$ where $\alpha_*$ denotes the number of label-relevant tokens in the training samples (more details on the setup are in Appendix \ref{app:sec_theory_background}). Our objective is to show that, under certain conditions, the fraction of label-relevant tokens is greater when the time series is transformed into a 2D representation ($\alpha_*^{\text{2D}}$) compared to the conventional 1D representation~($\alpha_*^{\text{1D}}$). This, in turn, can explain why 2D patching leads to more efficient learning than 1D patching (the proof is postponed to Appendix~\ref{app:sec_theory_proof}).

\begin{proposition}
For arbitrary $\bm{\mu}_1, \bm{\mu}_2 \in \mathbb{R}^k$, let $\bm{t} = [\bm{x}_1 \ \ \bm{x}_2 \ \ \cdots \ \ \bm{x}_k ]^\top \in \mathbb{R}^T \text{where } \forall i \in [k], \bm{x}_i \in \mathbb{R}^k$ and either $\bm{x}_i = \bm{\mu}_1$ or $\bm{x}_i = \bm{\mu}_2$ with $\bm{\mu}_2$ being a label-relevant pattern. Let $\left| \left\{ i : \bm{x}_i = \boldsymbol{\mu}_2 \right\} \right| = n'$ and assume that $2\bm{x}'\cdot (\bm{\mu}_1 - \bm{\mu}_2) \leq ||\bm{\mu}_1 ||^2 - ||\bm{\mu}_2||^2$ whenever $\left| \left\{ i : x'_i \in \boldsymbol{\mu}_2 \right\} \right| \geq \sqrt{k}$. Then, it holds:   $$\alpha_*^{\text{2D}} \geq \alpha_*^{\text{1D}}=\frac{n'}{k},$$ and the inequality is strict if $n' \text{ mod } \sqrt{k} > 0.$
\end{proposition}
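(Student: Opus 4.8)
The plan is to reason directly from the definition of a label-relevant token together with the block structure induced by the $k\times k$ reshaping. The first step is to observe that the assumed inequality is not an extra condition but simply a restatement of label-relevance: expanding the squared norms in $\|\bm{x}' - \bm{\mu}_2\| \le \|\bm{x}' - \bm{\mu}_1\|$ cancels the $\|\bm{x}'\|^2$ terms and rearranges exactly to $2\bm{x}'\cdot(\bm{\mu}_1 - \bm{\mu}_2) \le \|\bm{\mu}_1\|^2 - \|\bm{\mu}_2\|^2$. Since $\bm{\mu}_2$ is the label-relevant pattern, the hypothesis should therefore be read as a clean sufficient criterion: any token carrying at least $\sqrt{k}$ coordinates drawn from $\bm{\mu}_2$ is label-relevant.

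Next I would dispatch the $1$D side, which is immediate: the $1$D tokens are exactly the rows $\bm{x}_1,\dots,\bm{x}_k$, and such a token is label-relevant precisely when it equals $\bm{\mu}_2$. There are $n'$ of these among $k$ tokens, so $\alpha_*^{\text{1D}} = n'/k$. For the $2$D side I would exploit that each patch $\bm{P}_{i,j}$ draws its $\sqrt{k}$ rows from a single contiguous row-band $i$ (one of the $\sqrt{k}$ bands of $\sqrt{k}$ consecutive rows) and its $\sqrt{k}$ columns from column-band $j$. The key propagation step is that if row-band $i$ contains even one $\bm{\mu}_2$ row, then for every column-band $j$ the flattened token $\bm{x}'_{(i,j)}$ inherits the $\sqrt{k}$ coordinates of that $\bm{\mu}_2$ row restricted to the columns of band $j$; by the criterion above it is label-relevant. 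Hence a single label-relevant row activates all $\sqrt{k}$ patches in its band, and the count of label-relevant $2$D tokens is exactly $B\sqrt{k}$, where $B$ is the number of row-bands containing at least one $\bm{\mu}_2$ row, giving $\alpha_*^{\text{2D}} = B/\sqrt{k}$.

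The final step is a pigeonhole lower bound on $B$. Since each row-band holds at most $\sqrt{k}$ rows, the $n'$ rows equal to $\bm{\mu}_2$ cannot be packed into fewer than $\lceil n'/\sqrt{k}\rceil$ bands, and this holds for every arrangement of the rows, so $B \ge \lceil n'/\sqrt{k}\rceil$. Combining yields $\alpha_*^{\text{2D}} = B/\sqrt{k} \ge \lceil n'/\sqrt{k}\rceil/\sqrt{k} \ge (n'/\sqrt{k})/\sqrt{k} = n'/k = \alpha_*^{\text{1D}}$, and the middle inequality is strict exactly when $n'/\sqrt{k}$ fails to be an integer, i.e.\ when $n' \bmod \sqrt{k} > 0$.

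I expect the main obstacle to be the propagation step and the bookkeeping that makes it worst-case valid. One must argue that label-relevance of a patch depends only on which row-band it lives in and not on its column-band, so that whole bands are activated at once by any single $\bm{\mu}_2$ row; and one must then verify that the tightest packing of the $n'$ rows is the minimizer of $B$, so that the resulting inequality is uniform over all arrangements rather than merely true on average. Matching the index-counting in the hypothesis (coordinates versus rows, where one $\bm{\mu}_2$ row contributes exactly $\sqrt{k}$ coordinates) is the delicate point that ties the assumption to the block geometry.
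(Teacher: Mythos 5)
Your proposal is correct and follows essentially the same route as the paper's proof: both identify the assumed inequality as the expanded label-relevance condition $\|\bm{x}'-\bm{\mu}_2\|\leq\|\bm{x}'-\bm{\mu}_1\|$, observe that each $\bm{\mu}_2$ row contributes $\sqrt{k}$ coordinates to each of the $\sqrt{k}$ patches in its row-band and hence activates all of them, and then lower-bound the number of occupied bands (your pigeonhole bound $B\geq\lceil n'/\sqrt{k}\rceil$ is just a cleaner packaging of the paper's case split $n'=c\sqrt{k}$ versus $n'=c\sqrt{k}+b$ with $0<b<\sqrt{k}$). The only cosmetic difference is that you assert $\alpha_*^{\text{2D}}=B/\sqrt{k}$ exactly where only the lower bound is needed or justified, but this does not affect the argument.
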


To better illustrate this proposition, we visualize it using a concrete example. We define $\bm{\mu}_1=\text{sin}(x)$ for $x \in [0, \pi]$ and let $\bm{\mu}_2=-\bm{\mu}_1$. Figure \ref{fig:prop} (more examples are provided in Appendix \ref{app:sec_theory_examples}) displays the input time series $\bm{t}$ with $k=9$ and $n'=3$. In this case, the assumption $2\bm{x}'\cdot (\bm{\mu}_1 - \bm{\mu}_2) \leq ||\bm{\mu}_1 ||^2 - ||\bm{\mu}_2||^2$ simplifies to $\bm{x}'\cdot \bm{\mu}_1\leq 0$ and is verified for all tokens in 2D case and only for $n'$ tokens in 1D case. On a higher level, this proposition formalizes the idea that having a discriminative signal spread across more tokens (each $\bm{\mu}_2$ contributes to $\sqrt{k}$ tokens in 2D case) makes it easier for a Transformer model to pick up this signal and to learn the classification task better. In the case of 1D patching, this signal is less spread, forcing the model to ``work" harder to attend to important tokens during training.

\subsection{Time series classification with ViT representations}
In our real-world setting, we consider a multivariate time series dataset $\mathcal{T} = \{\bm{t}^n | \bm{t}^n \in \mathbb{R}^{T \times D}\}_{n=1}^N$ containing $N$ samples, each of length $T$ and dimensionality $D$. The corresponding targets $\mathcal{Y}=\{y^n\}_{n=1}^N$ are labels $y^n \in \{1, ..., C\}$ from $C$ different classes. In line with our theoretical framework, we transform the time series into images and apply ViTs on these images to extract representations for linear classification. Figure \ref{fig: methodology} illustrates our approach.

\begin{figure}[!t]
\centering
\includegraphics[width=0.98\textwidth]{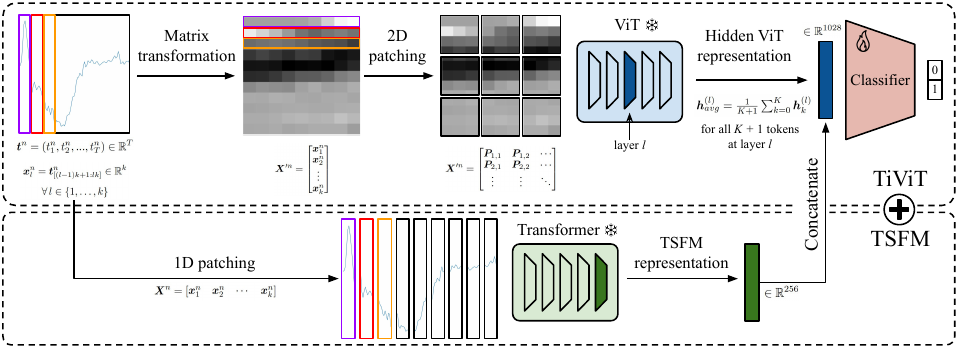}
\caption{Illustration of TiViT on a time series sample from ECG200 \cite{ecg200}. We split the time series into segments and stack them to form a grayscale image. Then, we patch the image in 2D and feed it into a frozen ViT pretrained on large-scale image datasets. We average the hidden representations from a specific layer and pass them to a learnable classification head. Combining the representations of TiViT and TSFMs such as Mantis further improves classification accuracy.}
\label{fig: methodology}
\end{figure}

\myparagraph{Time series-to-image transformation}
\label{subsection: time series to image transformation}
Following the channel independence assumption, proposed by \citet{nie2022patchtst} and widely adopted in most recent time series models \cite{feofanov2025mantis, goswami2024moment}, we first split a multivariate time series $\bm{t}^n \in \mathbb{R}^{T \times D}$ into $D$ univariate time series $\{\bm{t}^n_d \in \mathbb{R}^{T}\}_{d=1}^{D}$. We then normalize each univariate time series $\bm{t}^n_d$ using robust scaling, defined as: $$\frac{\bm{t}^n_d - Q_2}{Q_3 - Q_1},$$ where $Q_1, Q_2, Q_3$ are the first, second (median), and third quartiles, respectively. We apply padding at the beginning of each time series by replicating its first value and subsequently segment it into $M$ patches $\{\bm{x}_m\}_{m=1}^{M}$ of size $P$.
Given a patch length $P$ and stride $S$, the total number of patches is: $$M = \left\lfloor{\frac{T - P}{S}} \right\rfloor + 1.$$ We stack the patches to generate a 2D representation $\bm{X}' \in \mathbb{R}^{M \times P}$, which we then render into a grayscale image $\bm{X}' \in \mathbb{R}^{M \times P \times 3}$ by replicating its signals across three channels. To align with the square input resolution $(R, R)$ expected by the ViT, we resize the image.

\myparagraph{Time series classification}
We feed each grayscale image $\bm{X}'$ representing a univariate time series into a pretrained and frozen ViT $v$ with $L$ hidden layers. The ViT inherent 2D patching yields a sequence $\{{\bm{x}_k'} \in \mathbb{R}^{U^2}\}_{k=1}^K$ of flattened patches where $(U, U)$ is the resolution per patch and $K = R^2/U^2$ is the resulting number of patches. ViTs generally prepend a classification token to this sequence. The ViT consumes all input tokens and produces a sequence of features at every layer: $$v(\bm{X}')=\left\{[\bm{h}_0^{(l)}, \bm{h}_1^{(l)}, ..., \bm{h}_K^{(l)}]\right\}_{l=0}^{L}$$
To obtain a single embedding vector $\bm{e}$ per image, we select a specific layer $l$ and average its $K+1$ representations: $$\bm{e} = \bm{h}_{avg}^{(l)} = \frac{1}{K + 1} \sum_{k=0}^{K} \bm{h}_k^{(l)}$$ For multivariate time series, we feed per-channel image representations $\{\bm{X}'_d\}_{d=1}^D$ separately into the ViT and concatenate the resulting embeddings for a specified layer: $\text{Concat}(\bm{e}_1, ..., \bm{e}_D)$. We only train a linear classifier on the ViT representations and their corresponding class labels. To enhance the performance, the embeddings of frozen TSFMs and ViTs can be concatenated prior to classification.

\section{Experimental evaluation}\label{sec:exps}

We evaluate TiViT with different ViT backbones on two time series classification benchmarks.

\myparagraph{Datasets}
UCR \cite{dau2019ucr} comprises 128 univariate time series datasets of varying sample size ($16 \leq N_\text{train} \leq 8926$) and series length ($15 \leq T \leq 2844$).
UEA \cite{bagnall2018uea} consists of 30 multivariate time series datasets. Following \citet{feofanov2025mantis}, we exclude three datasets (AtrialFibrillation, StandWalkJump, PenDigits) from UEA  due to their short sequence length or small test size.

\myparagraph{Vision Transformers}
Our study examines three differently pretrained ViTs. CLIP \cite{radford2021clip} performs contrastive learning of image and text encoders on image-text pairs. We reuse the ViT image encoders of OpenCLIP \cite{Cherti_2023_CVPR,ilharco2021openclip} models trained with the LAION-2B English subset of LAION-5B \cite{schuhmann2022laion}. SigLIP~2~\cite{tschannen2025siglip2} adopts contrastive learning on image-text pairs, but with a Sigmoid loss, complemented by captioning-based pretraining, self-distillation, and masked prediction.
In contrast, DINOv2 \cite{oquab2024dinov2} is solely pretrained on images through self-distillation with a student-teacher architecture and masked modeling.
For each pretraining approach, we consider multiple vision model sizes (ViT-B, ViT-L, ViT-H) with varying layer depth (12, 24, and 32 layers).

\myparagraph{Baselines}
We compare TiViT to two state-of-the-art TSFMs exclusively pretrained on time series.
Mantis~\cite{feofanov2025mantis} is a Transformer model (8 M parameters) comprising 6 layers and 8 heads per layer, pretrained on 2 million time series with contrastive learning.
Moment~\cite{goswami2024moment} is a family of Transformers pretrained on 13 million time series with masked modeling. In our study, we consider Moment-base with 12 layers and 125 M parameters.

\myparagraph{Implementation}
To assess the effectiveness of TiViT and TSFM representations in time series classification, we train a logistic regressor with the LBFGS solver per dataset. Our evaluation adheres to the standard train-test splits provided by the UCR and UEA archive and reserves 20\% of the train split for validation.
For the time series-to-image transformation, we resize the grayscale images to the resolution expected by the ViT with nearest interpolation and adjust the contrast with a factor of~$0.8$. All experiments can be performed on a single NVIDIA V100 GPU with 16 GB memory. Our results are averaged over three random seeds.

\begin{figure}[!t]
\centering
\begin{subfigure}[b]{0.24\textwidth}
\label{subfigure: ecg 200 sample}
\centering
\includegraphics[width=0.95\textwidth]{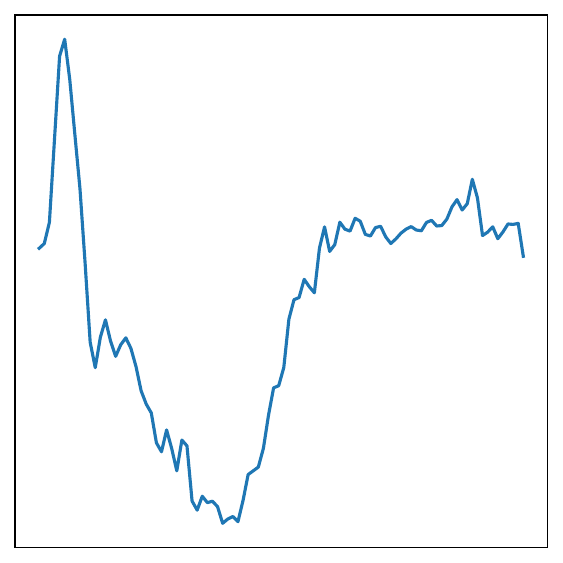}
\caption{ECG200 \cite{ecg200} sample}
\label{fig:patch1}
\end{subfigure}%
\hfill
\begin{subfigure}[b]{0.24\textwidth}
\label{subfigure: patch size small}
\centering
\includegraphics[width=0.95\textwidth]{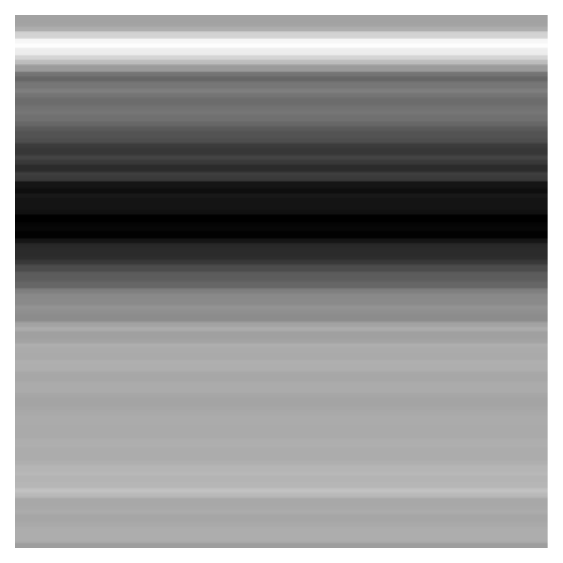}
\caption{$P = 1$}
\label{fig:patch2}
\end{subfigure}%
\hfill
\begin{subfigure}[b]{0.24\textwidth}
\label{subfigure: patch size sqrt}
\centering
\includegraphics[width=0.95\textwidth]{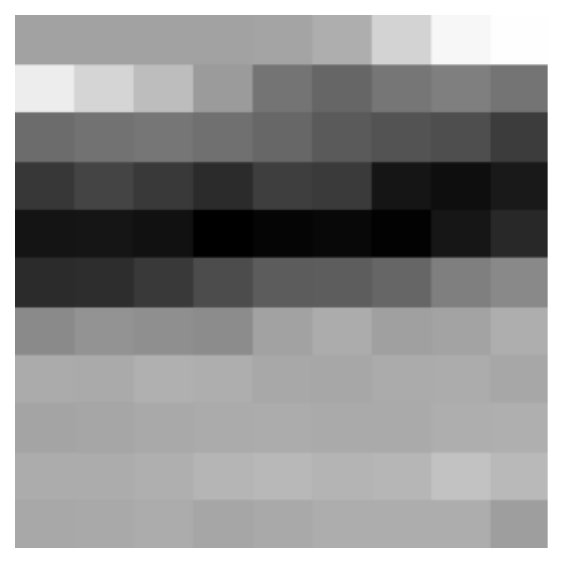}
\caption{$P = \sqrt{T}$}
\label{fig:patch3}
\end{subfigure}%
\hfill
\begin{subfigure}[b]{0.24\textwidth}
\label{subfigure: patch size large}
\centering
\includegraphics[width=0.95\textwidth]{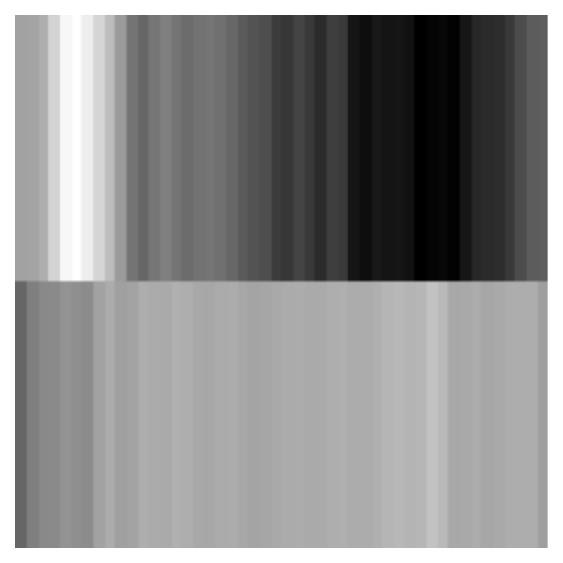}
\caption{$P = \frac{T}{2}$}
\label{fig:patch4}
\end{subfigure}
\caption{Effect of patch size $P$ on the time series-to-image transformation. To match the ViT input resolution, a small patch size ($P=1$) requires horizontal stretching, while a large patch size ($P = \frac{T}{2}$) requires vertical stretching. Both scenarios result in redundant tokens.}
\label{fig:patch_size}
\end{figure}

\setlength\intextsep{2pt}
\begin{wraptable}[8]{r}{0.42\linewidth}
\centering
\caption{Comparison of patching strategies on the UCR benchmark.}
\label{tab:matrix-patching-exp-res}
\begin{tabular}{@{}lcccc@{}}
\toprule
\multirow{2}{*}{Patching} & \multicolumn{2}{c}{Non-overlap} & \multicolumn{2}{c}{Overlap} \\
\cmidrule(lr){2-3}
\cmidrule(lr){4-5}
& 1D & 2D & 1D & 2D \\
\midrule
Accuracy & 76.4 & 76.5 & 76.6 & \textbf{77.4} \\
\bottomrule
\end{tabular}
\end{wraptable}

\subsection{Comparison of 1D and 2D patching with Transformers}
The way we transform a time series into an image can be seen as a special patching strategy for time series.
We study how the 2D patching proposed in Section \ref{subsection: theoretical motivation} improves the quality of representations learned by Transformers. We fix the Transformer architecture and pretraining method, and only vary the patching strategy. We then evaluate the representations learned by the Transformer model on the UCR benchmark.

Following \citet{feofanov2025mantis}, we pretrain a Transformer model with 6 layers and 8 heads per layer using contrastive learning. More implementation details can be found in Appendix~\ref{app: 1d 2d comparison}. We compare 1D and 2D patching with both non-overlapping and overlapping patches. As summarized in Table \ref{tab:matrix-patching-exp-res}, 2D patching outperforms 1D patching, with overlapping 2D patches yielding the highest classification accuracy. This finding confirms our theoretical analysis in Section \ref{subsection: theoretical motivation}, showing that the transformation of time series to images can be beneficial for time series classification.
Subsequently, we build on this idea of modeling time series as images and further leverage state-of-the-art pretrained vision models for feature extraction.

\begin{figure}[tb]
    \centering
    \begin{subfigure}[b]{0.47\textwidth}
        \centering
        \includegraphics[width=0.95\textwidth]{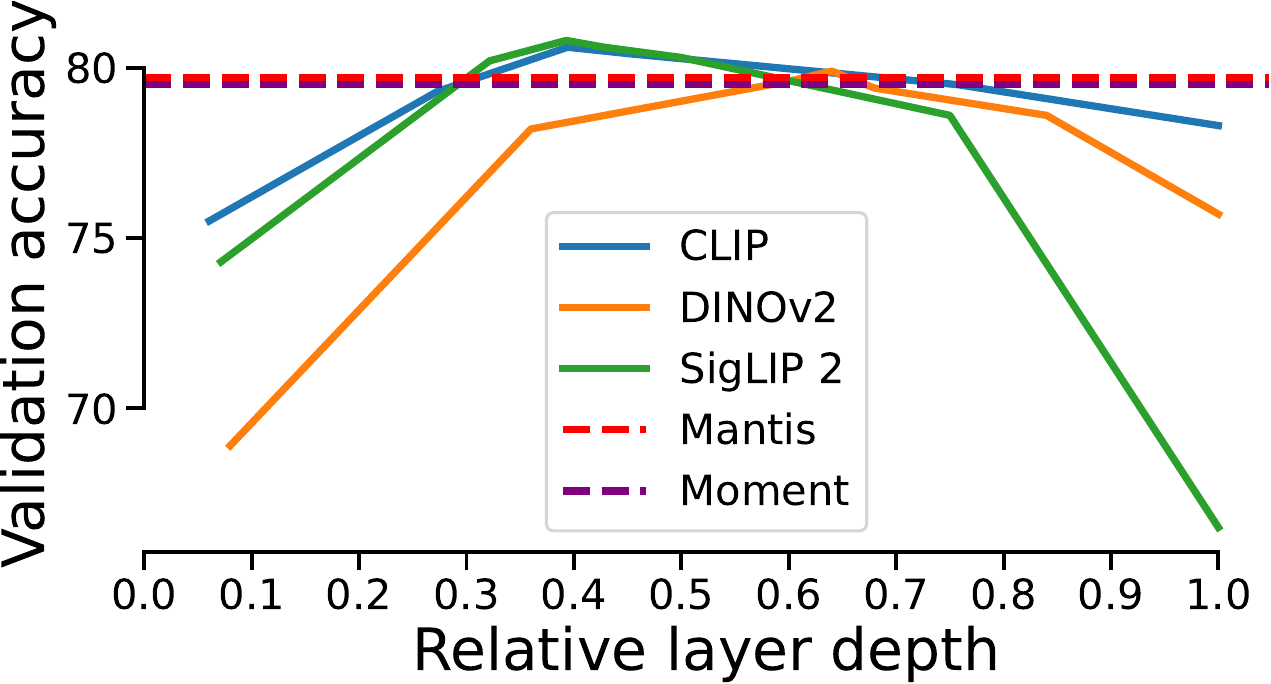}
        \caption{Validation accuracy}
        \label{fig: layer vs accuracy}
    \end{subfigure}
    \hfill
    \begin{subfigure}[b]{0.47\textwidth}
        \centering
        \includegraphics[width=0.95\textwidth]{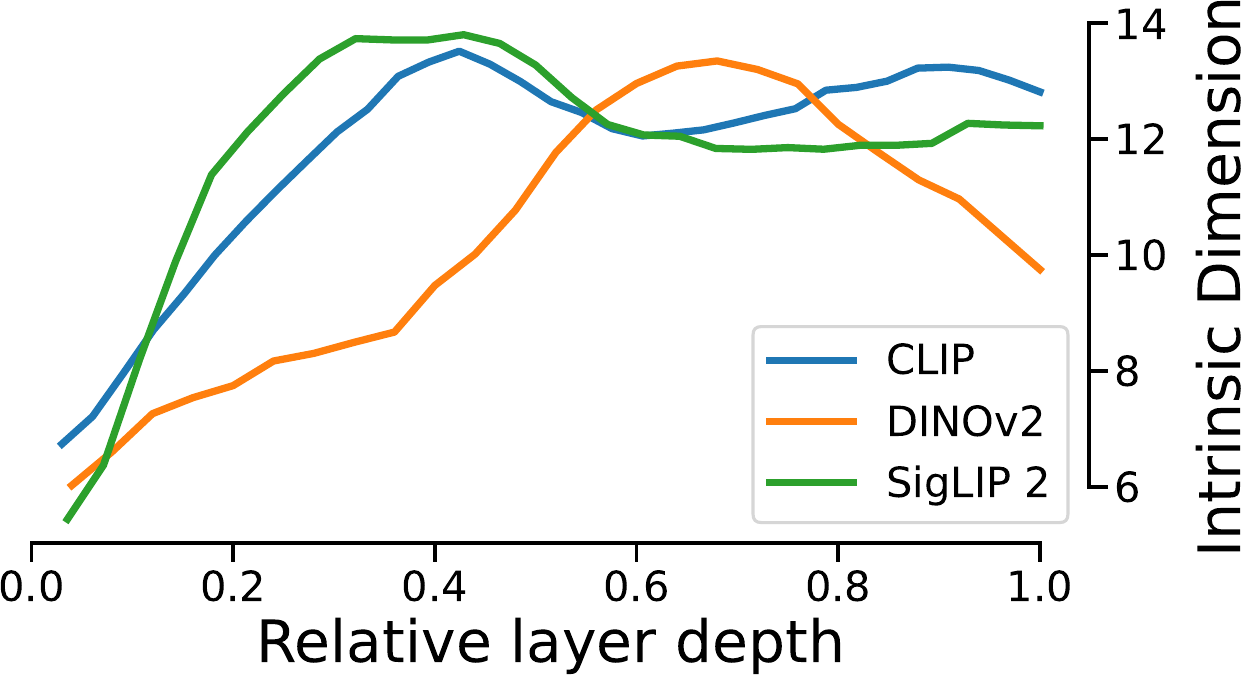}
        \caption{Intrinsic dimension}
        \label{fig: intrinsic dimensionality}
    \end{subfigure}
    \caption{(a) Validation accuracy and (b) Intrinsic dimensionality using hidden representations at different depth of ViTs ({\color{NavyBlue} CLIP}, {\color{Orange} DINOv2} and {\color{ForestGreen} SigLIP 2}). Results are averaged over 128 datasets from the UCR benchmark and three random seeds.}
    \label{fig: layer analysis}
\end{figure}

\begin{table}[tb]
\centering
\caption{Comparison of the effects on validation accuracy of (a) Patch size $P$ and (b) Patch overlap. Results are averaged across the 128 datasets of UCR benchmark for 3 random seeds.}
\begin{subtable}[t]{0.34\textwidth}
\centering
\caption{Selecting patch size $P$}
\label{tab: periodicity search}
\begin{tabular}{@{}lcc@{}}
\toprule
Patch size & $\sqrt{T}$ & $P^*$ \\
\midrule
Val accuracy & 78.2 & \textbf{79.5} \\
\bottomrule
\end{tabular}
\end{subtable}
\hfill
\begin{subtable}[t]{0.65\textwidth}
\centering
\caption{Effect of patch overlap on validation accuracy}
\label{tab: overlap accuracy}
\begin{tabular}{@{}lcccccc@{}}
\toprule
Overlap & 0.0 & 0.25 & 0.5 & 0.75 & 0.9 & 0.95\\
\midrule
Val accuracy & 78.2 & 79.3 & 80.2 & 80.0 & \textbf{80.4} & 80.0 \\
\bottomrule
\end{tabular}
\end{subtable}
\end{table}

\subsection{Transforming time series into images for ViT feature extraction}
\label{subsection: transforming time series into images for vit feature extraction}

The time series-to-image transformation described in Section \ref{subsection: time series to image transformation} is sensitive to the patch size $P$. This parameter affects the visual appearance of the image representation provided to the ViT for feature extraction. Figure~\ref{fig:patch_size} displays a time series sample from the ECG200 \cite{ecg200} dataset along with its corresponding image representations for three different patch sizes. After patching and stacking, the 2D matrix is resized to the quadratic image resolution required by ViTs. Using very small (Figure~\ref{fig:patch2}) or very large (Figure~\ref{fig:patch4}) patch sizes results in redundant tokens representing the same input signal.
To avoid a computationally expensive hyperparameter search to find the best patch size $P^*$ per dataset, we propose to select $P = \sqrt{T}$ for any dataset of length $T$. A patch size of $\sqrt{T}$ yields a square-shaped image prior to resizing and thus the most diverse set of patches without any horizontal or vertical distortion (Figure~\ref{fig:patch3}). Moreover, this setting is in line with our theoretical motivation in Section \ref{subsection: theoretical motivation}.

Table \ref{tab: periodicity search} presents the classification accuracy for TiViT with a CLIP backbone (TiViT-CLIP) and non-overlapping patches. To provide an upper bound on the classification performance, we perform a hyperparameter search for the best patch size $P^*$. Specifically, for each dataset of length $T$, we consider 20 equally spaced values in $[1, \frac{T}{2}]$ and identify the patch size that maximizes classification accuracy on the validation set. Note that, while there is a small decline in accuracy on the test set, when consistently applying $P = \sqrt{T}$, the computational cost is reduced by a factor of 20.
We further investigate the impact of overlapping patches on the classification accuracy while maintaining the patch size at $P = \sqrt{T}$. Table \ref{tab: overlap accuracy} reports the effect of patch overlap, defined as a fraction of the patch length, on the classification performance for TiViT-CLIP. Our findings reveal that overlapping patches enhance accuracy in downstream classification. Consequently, all subsequent experiments employ a patch size of $P = \sqrt{T}$ and a stride of $S = \frac{P}{10}$.

\subsection{Hidden representations are most effective in time series classification}
\label{subsection: hidden representatoins}

We repurpose frozen ViTs as feature extractors for time series data. While the final representations of ViTs typically capture high level semantics, intermediate layers encode lower level information \cite{dorszewski2025colors}. Our study reveals that the intermediate representations of ViTs are the most effective for downstream classification. In Figure \ref{fig: layer vs accuracy} we report the classification performance of TiViT with pretrained ViTs from DINOv2, CLIP, and SigLIP 2 on the validation split of the UCR benchmark. For each dataset, we extract representations from the hidden layers of ViTs, average them, and train a linear classifier. The intermediate representations of ViTs, between 40\% and 70\% of the layer depth, achieve the highest classification accuracy. Table \ref{tab: hidden layers} summarizes the performance on the UCR test split.

\renewcommand{\arraystretch}{1.}
\begin{table}[!t]
\centering
\caption{Linear classification with TiViT on UCR. For each model, we report the test accuracy achieved with the best performing hidden layer representation.}
\label{tab: hidden layers}
\resizebox{.9\linewidth}{!}{
\begin{tabular}{@{}llcclc@{}}
\toprule
Model & Architecture & Layer & Parameters & Data & Accuracy\\
\midrule
TiViT-DINOv2 & ViT-L/14 & 15 & 178 M & LVD-142M & 80.0 \\
TiViT-SigLIP 2 & SoViT-400m/14 & 10 & 138 M & WebLI (10B) & 80.6 \\
TiViT-CLIP & ViT-H/14 & 14 & 257 M & LAION-2B  & \textbf{81.3} \\
\bottomrule
\end{tabular}
}
\end{table}

\begin{minipage}{\textwidth}
\centering
\begin{minipage}[m]{0.47\textwidth}
\centering
\includegraphics[width=0.9\linewidth]{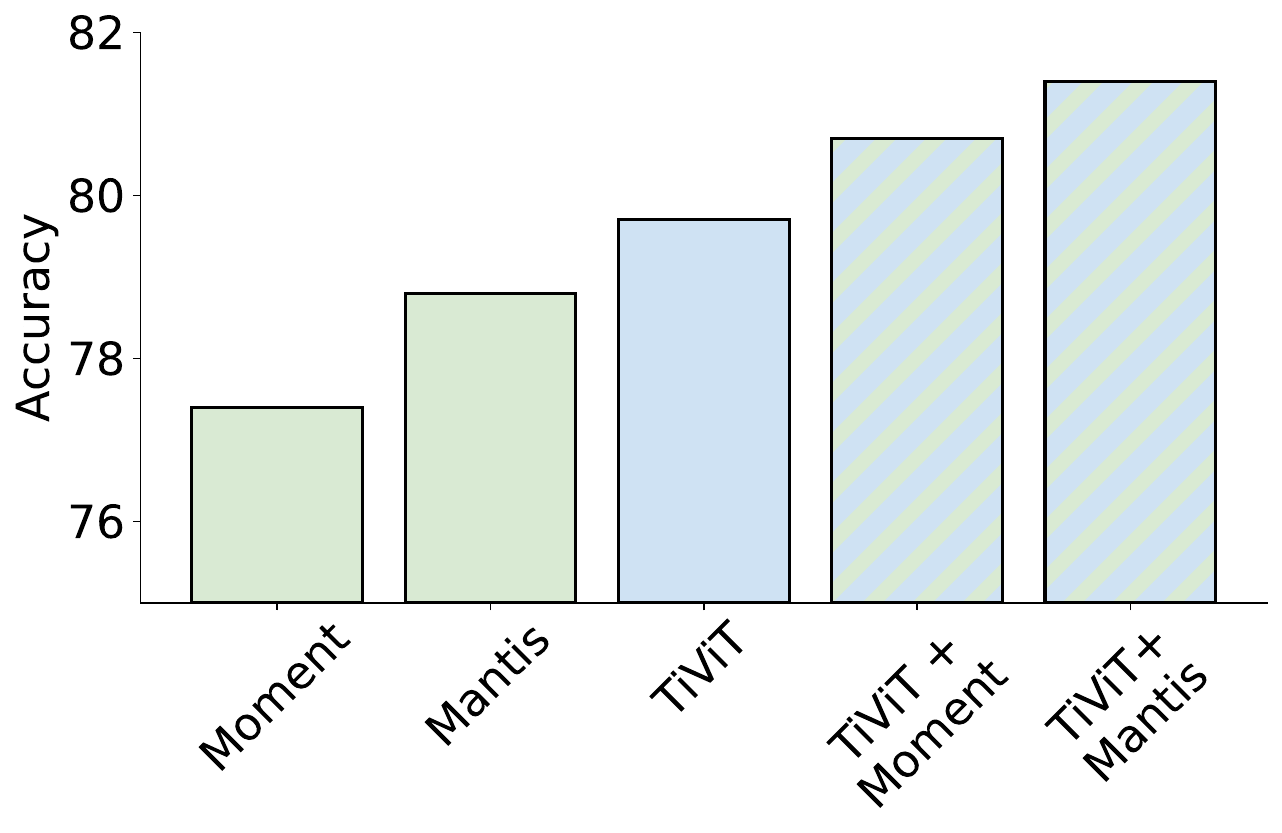}
\captionof{figure}{Evaluation on 155 datasets from the UEA and UCR archive.}
\label{fig: ucr uea joint benchmark}
\end{minipage}
\hfill
\begin{minipage}[m]{.47\linewidth}
\centering
\vspace{0.2cm}
\resizebox{.9\linewidth}{!}{
\begin{tabular}{@{}lcc@{}}
\toprule
Model & UCR & UEA \\
\midrule
Moment & 79.0 & 69.9\\
Mantis & 80.1 & 72.4\\
\midrule
TiViT \emph{(Ours)} & 81.3 & 72.0\\
\midrule
TiViT + Moment \emph{(Ours)} & 82.5 & 72.6\\
TiViT + Mantis \emph{(Ours)} & \textbf{83.0} & \textbf{73.7} \\
\bottomrule
\end{tabular}
}
\vspace{0.55cm}
\captionof{table}{Classification accuracy of TSFMs and TiViT per benchmark.}
\label{tab: ucr uea joint benchmark}
\end{minipage}
\vspace{0.5cm}
\end{minipage}

CLIP and SigLIP 2, both optimized with a contrastive loss on image-text pairs, reach best performance in their earlier layers: layer 14 of 33 for CLIP (ViT-H) and layer 10 of 28 for SigLIP 2 (SoViT-400m). In contrast, DINOv2 (ViT-L) trained with contrastive learning and masked modeling on images only, reaches the highest classification accuracy with representations from a later layer (15 of 25).
Our selection of architectures per pretraining paradigm ensures that TiViT exhibits a similar number of layers and parameters up to the best performing hidden layer.
For each ViT, we determine the optimal hidden layer based on its highest validation accuracy across the 128 datasets of the UCR benchmark. This best performing layer per ViT is consistently used in all subsequent experiments.

\myparagraph{Intrinsic dimension} 
To better understand the hidden representations of ViTs, we analyze their intrinsic dimension (see Figure~\ref{fig: intrinsic dimensionality}) and principal components (see Appendix \ref{app: intrinsic dim and pca}). 
\citet{valeriani2023geometry} have previously investigated the geometry of hidden representations of Transformers for in-domain vision and language applications.
We measure the intrinsic dimension of ViTs applied on time series from the UCR archive using the DADApy \cite{dadapy} implementation of the TWO-NN estimator~\cite{facco2017estimating}.
Figure \ref{fig: intrinsic dimensionality} displays for three different ViT backbones the intrinsic dimensionality of their representations at varying layer depth.
The best performing layers, according to Figure \ref{fig: layer vs accuracy} between 40\% and 70\% of the layer depth, exhibit the highest intrinsic dimensionality.

\myparagraph{Benchmark} Unless stated otherwise, we refer to our best-performing model, TiViT-CLIP, as TiViT. A full comparison of TiViT and TSFMs on the UCR and UEA test set is illustrated in Figure \ref{fig: ucr uea joint benchmark}. Across all 155 datasets, TiViT reaches a mean accuracy of 79.7\% surpassing Mantis with a mean accuracy of 78.8\%. Results per benchmark can be found in Table \ref{tab: ucr uea joint benchmark}. The state-of-the-art TSFM Mantis achieves a linear classification accuracy of 80.1\% on the UCR benchmark.
Our statistical analysis with a paired t-test and a significance level of 0.05 confirms that TiViT significantly outperforms ($p=0.03$) Mantis across the 128 datasets of the UCR benchmark, achieving 81.3\% accuracy. 
We further extend our analysis to the classification of multivariate time series. As depicted in Table \ref{tab: ucr uea joint benchmark}, TiViT reaches a classification accuracy of 72.0\%, which is statistically on par with Mantis (72.4\%) on the UEA benchmark. 
Remarkably, for multivariate datasets, a simple concatenation of per-channel representations, without learning any explicit fusion or cross-channel interactions, reaches the state-of-the-art performance of Mantis.

\subsection{Alignment and fusion of TiViT and TSFM representations}
\label{subsection: alignment and fusion}

\begin{table}[tb]
\centering
\caption{Joint linear classification with TiViT and TSFMs on the UCR benchmark. We measure the alignment of the representation spaces using the mutual k-NN metric.}
\label{fig: joint classification and alignment}
\resizebox{.6\linewidth}{!}{
\begin{tabular}{@{}cc|cc|cc@{}}
\toprule
\multicolumn{2}{c|}{TiViT} & \multicolumn{2}{c|}{TSFM} & Joint & Alignment \\
CLIP & DINOv2 & Mantis & Moment & Accuracy & Score \\
\midrule
-- & -- & 80.1 & 79.0 & 81.5 & 0.319\\
-- & 80.0 & -- & 79.0 & 81.8 & 0.296\\
81.3 & 80.0 & -- & -- & 82.0 & \textbf{0.484}\\
-- & 80.0 & 80.1 & --  & 82.2 & 0.323\\
81.3 & -- & -- & 79.0 & 82.5 & 0.321 \\
81.3 & -- & 80.1 & -- & \textbf{83.0} & 0.338\\
\bottomrule
\end{tabular}
}
\end{table}

We do not only compare the effectiveness of TiViT and TSFM representations against each other, but also explore their complementarity when concatenating their features for joint classification. As depicted in Table \ref{fig: joint classification and alignment}, the combination of TiViT and TSFM consistently improves the classification performance over any standalone model. While the combination of two TSFMs yields 81.5\% accuracy, fusing TiViT-CLIP with Moment and Mantis leads to even higher accuracies of 82.5\% and 83.0\%, respectively. 
These results underscore the potential of multimodal time series analysis.

To uncover the differences between representations learned by ViTs and TSFMs, we additionally assess the alignment of their representation spaces using the mutual k-nearest neighbor metric \cite{huh2024prh}. For each sample in the dataset, we find the $k=10$ nearest neighbors in the embedding space of two different models and measure the intersection between the two neighbor sets. The final alignment score between two models is an average across all samples from the UCR benchmark. Table \ref{fig: joint classification and alignment} presents the alignment scores for CLIP, DINOv2, Mantis, and Moment. Interestingly, the alignment score of the two TSFMs is relatively low. We hypothesize that this discrepancy arises from their different pretraining paradigms: Mantis is trained contrastively while Moment is trained with masked modeling.
A similarly low alignment score is observed between any TiViT and TSFM, which we attribute to their domain gap. TiViT and Mantis extract different representations for the same time series, which is beneficial for joint classification.
The highest alignment is measured between TiViT-CLIP and TiViT-DINOv2, both of which are pretrained contrastively on image datasets.

\subsection{Ablation studies}
\label{subsection: ablation studies}

\renewcommand{\arraystretch}{1.1}
\begin{table}[!t]
  \centering
  \caption{Linear classification accuracy on UCR subsets (left) and classifier comparison (right).}
  \label{tab:ucr_combined}
  \resizebox{.95\linewidth}{!}{
    \begin{tabular}{@{}lcccccc@{}}
      \toprule
      \multicolumn{1}{c}{} & \multicolumn{4}{c}{\textbf{UCR subsets}} & \multicolumn{2}{c}{\textbf{Classifier comparison}} \\
      \cmidrule(lr){2-5} \cmidrule(lr){6-7}
      \textbf{Model} & Smallest & Largest & Shortest & Longest & Logistic Reg. & Nearest Centroid \\
      \midrule
      Moment & 85.7 & 85.5 & 86.9 & 65.8 & 79.0 & 68.6 \\
      Mantis & 86.6 & 82.8 & 88.1 & 70.5 & 80.1 & 71.2 \\
      \midrule
      TiViT \emph{(Ours)} & 89.8 & 85.3 & 87.5 & 75.0 & 81.3 & 71.6 \\
      \midrule
      TiViT + Moment \emph{(Ours)} & 89.9 & \textbf{87.1} & \textbf{88.8} & 74.9 & 82.5 & 73.3 \\
      TiViT + Mantis \emph{(Ours)} & \textbf{90.9} & 86.2 & \textbf{88.8} & \textbf{77.7} & \textbf{83.0} & \textbf{73.4} \\
      \bottomrule
    \end{tabular}
  }
\end{table}

In Section \ref{subsection: hidden representatoins}, we report the performance of TiViT across all 128 UCR datasets. To further explore its capabilities, we now select four UCR subsets: 10 datasets with the fewest training samples ($16 \leq N_{train} \leq 20$), the most training samples ($1000 \leq N_{train} \leq 8926$), the shortest time series ($15 \leq T \leq 80$), and the longest time series ($1500 \leq T \leq 2844$). The results are displayed in Table~\ref{tab:ucr_combined}. TiViT significantly outperforms Mantis on subsets with a small training set (89.8\% vs. 86.6\%) and long time series (75.0\% vs. 70.5\%). These findings demonstrate that TiViT excels in generalizing from limited training data and in modeling long-range dependencies. On the remaining two subsets, TiViT is on par with TSFMs.
Combining the representations of TiViT and TSFMs achieves the highest classification accuracy across all subsets, once again underscoring their complementarity.

While the previous experiments require to train a logistic regressor for classification, we finally investigate the effectiveness of TiViT in zero-shot classification. Here, we employ a nearest centroid classifier, where each class is represented by the centroid of its representations, and samples are assigned to the class of their nearest centroid. On the UCR benchmark, TiViT achieves a zero-shot classification accuracy of 71.6\%. Our approach is on par with Mantis (71.2\%) and outperforms Moment (68.6\%), highlighting the ability of TiViT to extract generalizable representations. We further merge the representations of TiViT and Mantis, reaching a zero-shot accuracy of 73.4\%.

\section{Conclusion}\label{sec:ccl}
In this paper, we showed both theoretically and empirically that modeling time series in 2D rather than 1D benefits time series classification with Transformers. Building on this insight, we introduced TiViT, leveraging large pretrained ViTs for feature extraction on images generated from time series. Our analysis revealed that the hidden representations of ViTs characterized by high intrinsic dimensionality are most effective in time series classification. Utilizing the hidden representations of OpenCLIP, TiViT significantly outperformed state-of-the-art TSFMs in time series classification on UCR, and reached comparable performance on UEA. Furthermore, we investigated multimodal time series analysis by merging the representations of TiViT and TSFMs, and achieved state-of-the-art results for foundation models in zero-shot and linear classification on both benchmarks.

\myparagraph{Limitations and future work}
For multivariate datasets, TiViT processes the signal from each channel independently and simply concatenates the representations prior to classification. Future research could explore more sophisticated techniques for capturing inter-channel dependencies. Similarly, the joint classification of TiViT and TSFMs is currently limited to the concatenation of their representations and could be further enhanced by learning a fusion strategy. While our study focused on evaluating the representations learned by TSFMs and ViTs through linear classification, subsequent work could unfreeze the ViT backbone and investigate the potential of finetuning TiViT.

\begin{ack}
This work was partially funded by the ERC (853489 - DEXIM) and the Alfried Krupp von Bohlen und Halbach Foundation, which we thank for their generous support. The authors gratefully acknowledge the scientific support and resources of the AI service infrastructure \textit{LRZ AI Systems} provided by the Leibniz Supercomputing Centre (LRZ) of the Bavarian Academy of Sciences and Humanities (BAdW), funded by Bayerisches Staatsministerium für Wissenschaft und Kunst (StMWK).

\end{ack}

{
\small
\bibliographystyle{plainnat}
\bibliography{references}
}
\newpage

\setcounter{table}{0}
\renewcommand{\thetable}{A\arabic{table}}
\setcounter{figure}{0}
\renewcommand{\thefigure}{A\arabic{figure}}

\appendix
\section*{Appendix}

In Section \ref{app:sec_theory}, we summarize the theoretical analysis of \citet{li2023a} on learning and generalization for Vision Transformers and detail our proof of label relevance for 2D patching. In Section \ref{app: 1d 2d comparison}, we describe the model and pretraining setup used in our comparison of 1D and 2D patching for Transformers. In Section \ref{app: tivit additional analysis}, we further analyze the size and type of TiViT backbones. In Section \ref{app: ucr uea full benchmark}, we provide the benchmark results for each dataset from the UCR and UEA archive.
Finally, we discuss the broader impacts of our work in Section \ref{app: broader impacts}.

\section{Details on the theoretical analysis}
\label{app:sec_theory}
We first review the shallow ViT and data model introduced by \citet{li2023a} in their theoretical analysis of training a ViT. Their Theorem~\ref{trm:iclr2023} shows that the sample complexity for ViTs to achieve a zero generalization error is inversely correlated with the fraction of label-relevant tokens.
Building on this insight, we provide a detailed proof of our Proposition 1 from the main paper, showing that 2D patching can increase the number of label-relevant tokens compared to 1D patching. We further illustrate our Proposition 1 with various examples of time series and their corresponding 2D representations.

\subsection{Background}
\label{app:sec_theory_background}
\paragraph{Model and setup} Following the setup of \citet{li2023a}, we study a binary classification problem with \(N\) training samples \(\{({\bm{X}}^n, y^n)\}_{n=1}^N\). Each input \({\bm{X}}^n \in \mathbb{R}^{d \times L}\) contains \(L\) tokens $\{{\bm{x}}_1^n, \dots, {\bm{x}}_L^n\}$. Labels $y^n \in \{\pm 1\}$ are determined by majority vote over discriminative tokens. A simplified Vision Transformer (ViT)~\citep{dosovitskiy2021an} model is defined as:
\[
F({\bm{X}}^n) = \frac{1}{|\mathcal{S}^n|} \sum_{l \in \mathcal{S}^n} \bm{a}_{(l)}^{\top} \text{ReLU}\left({\bm{W}}_O {\bm{W}}_V {\bm{X}}^n \text{softmax}\left({\bm{X}}^{n^\top} {\bm{W}}_K^\top {\bm{W}}_Q {\bm{x}}_l^n\right)\right),
\]
where \(\psi = ({\bm{A}}\!=\!\{\bm{a}_{(l)}\}_l, {\bm{W}}_O, {\bm{W}}_V, {\bm{W}}_K, {\bm{W}}_Q)\) are trainable parameters. The empirical risk minimization problem is:
\[
\min_{\psi} f_N(\psi) = \frac{1}{N} \sum_{n=1}^N \max\left\{1 - y^n \cdot F({\bm{X}}^n), 0\right\}.
\]
Training uses mini-batch SGD with fixed output layer weights \({\bm{A}}\), following standard NTK initialization practices.

\paragraph{Data model}
Tokens \({\bm{x}}_l^n\) are noisy versions of \(M\) patterns \(\{\bm{\mu}_1, \dots, \bm{\mu}_M\}\), where \(\bm{\mu}_1, \bm{\mu}_2\) are discriminative. Label \(y^n\) depends on majority vote over tokens closest to \(\bm{\mu}_1/\bm{\mu}_2\). Noise level \(\tau\) satisfies \(\tau < \kappa/4\), with \(\kappa - 4\tau = \Theta(1)\).

\paragraph{Generalization of ViT}

We now recap the main results from \citet{li2023a} from which we derive our result, along with the main notations in Table \ref{tab:notations}.
\begin{assumption*}[Initial Model Conditions, \cite{li2023a}]
Initial weights \({\bm{W}}_V^{(0)}, {\bm{W}}_K^{(0)}, {\bm{W}}_Q^{(0)}\) satisfy:
\[
\|{\bm{W}}_V^{(0)}\bm{\mu}_j - {\bm{p}}_j\| \leq \sigma, \quad \|{\bm{W}}_K^{(0)}\bm{\mu}_j - {\bm{q}}_j\| \leq \delta, \quad \|{\bm{W}}_Q^{(0)}\bm{\mu}_j - {\bm{r}}_j\| \leq \delta,
\]
for orthonormal bases \(\mathcal{P}, \mathcal{Q}, \mathcal{R}\) and \(\sigma = O(1/M), \delta < 1/2\).
\end{assumption*}

\begin{theorem*}[Generalization of ViT, \cite{li2023a}]
\label{trm:iclr2023}
Under Assumption 1, with sufficient model width \(m \gtrsim \epsilon^{-2}M^2\log N\), fraction 
$$\alpha_* \geq \alpha_\#/(\epsilon_S e^{-(\delta+\tau)}(1 - (\sigma + \tau)),$$ 
and sample size 
$$N \geq \Omega\left((\alpha_* - c'(1 - \zeta) - c''(\sigma + \tau))^{-2}\right),$$ 
SGD achieves zero generalization error after 
$$T = \Theta\left(\frac{1}{(1 - \epsilon - (\sigma + \tau)M/\pi)\eta \alpha_*}\right)$$ iterations.
\end{theorem*}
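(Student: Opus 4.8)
The plan is to prove this through a coupled optimization–generalization analysis of the single-layer attention model $F$, in the feature-learning paradigm where one tracks how mini-batch SGD reshapes the attention map so that it concentrates on label-relevant tokens. Throughout, the output weights $\bm{A}$ are frozen at their NTK initialization, so only $({\bm{W}}_V, {\bm{W}}_K, {\bm{W}}_Q)$ and implicitly ${\bm{W}}_O$ evolve, and the single quantity governing every bound is the fraction $\alpha_*$ of label-relevant tokens. First I would characterize the model at $t=0$. Assumption~1 guarantees that ${\bm{W}}_V^{(0)}, {\bm{W}}_K^{(0)}, {\bm{W}}_Q^{(0)}$ send each pattern $\bm{\mu}_j$ close to the orthonormal bases $\mathcal{P},\mathcal{Q},\mathcal{R}$ up to errors $\sigma,\delta$; combined with the noise bound $\tau<\kappa/4$ this shows the pre-softmax logits ${\bm{X}}^{n^\top}{\bm{W}}_K^{(0)\top}{\bm{W}}_Q^{(0)}{\bm{x}}_l^n$ are nearly uniform, so the initial attention is diffuse and $F$ outputs a value of size $O(\sigma+\tau)$. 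The factors $e^{-(\delta+\tau)}$ and $(1-(\sigma+\tau))$ in the stated threshold on $\alpha_*$ originate precisely from these initial logit and value estimates.

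The core step is to track the attention dynamics under SGD on the hinge loss $f_N$. Here I would compute the gradients of $F$ with respect to the key/query and value projections and establish, by induction on the iteration count, an invariant: the softmax mass placed on label-relevant tokens is non-decreasing, while the value branch ${\bm{W}}_O{\bm{W}}_V$ progressively aligns the ReLU-activated relevant patterns with the correct sign of $y^n$. Because each correctly handled relevant token contributes a gradient signal of order $\alpha_*$ while the noise contributes $O(\sigma+\tau)$, the net drift is positive exactly when $\alpha_*$ exceeds $\alpha_\#/(\epsilon_S e^{-(\delta+\tau)}(1-(\sigma+\tau)))$. The width condition $m \gtrsim \epsilon^{-2}M^2\log N$ enters so that, uniformly over patterns and samples, enough ReLU neurons remain active for a Hoeffding-type concentration over neurons to hold. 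Iterating this drift with step size $\eta$ drives the margin to $y^n F({\bm{X}}^n)\geq 1$ on every training point after $T=\Theta\big(1/((1-\epsilon-(\sigma+\tau)M/\pi)\eta\alpha_*)\big)$ steps, i.e.\ zero training loss.

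Finally I would convert zero training loss into zero \emph{generalization} error. The sample-size requirement $N \geq \Omega((\alpha_* - c'(1-\zeta) - c''(\sigma+\tau))^{-2})$ is a concentration condition: for a fresh sample the label is decided by a majority vote over tokens, so the empirical fraction of relevant tokens must stay within the effective margin $\alpha_* - c'(1-\zeta) - c''(\sigma+\tau)$ of its population value with high probability, which by a binomial/Hoeffding bound needs $N$ inversely proportional to the squared margin. Combining the learned attention concentration from the previous step with this uniform-convergence estimate shows the trained model classifies unseen samples by the same relevant-token majority vote it realizes on the training set, yielding zero population error under the stated hyperparameter regime.

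I expect the main obstacle to be the attention-dynamics step. Because the softmax logits depend on the evolving ${\bm{W}}_K,{\bm{W}}_Q$ and feed nonlinearly into the value aggregation, the iterates cannot be linearized globally as in a pure NTK argument; the delicate part is proving the inductive invariant that attention mass on relevant tokens never decreases while simultaneously keeping all error sources $\sigma,\tau,\delta,\epsilon$ controlled \emph{uniformly} across all $N$ samples and all $m$ neurons throughout the full $T$ iterations.
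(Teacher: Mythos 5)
You should first note a mismatch of genre: the paper does not prove this statement at all. It appears in Appendix~A.1 as background, quoted verbatim (with attribution) from \citet{li2023a}; the only result the paper proves itself is Proposition~1 on label-relevant tokens under 2D patching. So there is no paper-internal proof to compare your attempt against, and any review must judge your sketch against the cited source's argument rather than against this paper.

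Judged on those terms, your outline correctly reconstructs the broad strategy of \citet{li2023a} --- a feature-learning (non-NTK) analysis in which SGD progressively concentrates softmax attention on label-relevant tokens, the width condition $m \gtrsim \epsilon^{-2}M^2\log N$ supplies concentration over neurons, and $\alpha_*$ governs both the drift rate and the iteration count $T$. But as a proof it has genuine gaps beyond the one you flag. First, the statement's constants are load-bearing and you never derive or even define them: $\alpha_\#$ (the fraction of confusion tokens), $\epsilon_S$, $\zeta$, $c'$, $c''$, and the $(\sigma+\tau)M/\pi$ term in $T$ (which in the source comes from a Gaussian-initialization angle computation over the $M$ patterns, not from the logit estimates you gesture at) all have specific origins in the source's lemmas, and without them the theorem's quantitative thresholds cannot be recovered. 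Second, you misattribute the sample-size condition $N \geq \Omega\bigl((\alpha_* - c'(1-\zeta) - c''(\sigma+\tau))^{-2}\bigr)$ to a test-time Hoeffding bound on a fresh sample's token fractions; in the source it is needed so that empirical quantities over the \emph{training set} (fractions of label-relevant tokens and the resulting mini-batch gradient directions) concentrate well enough for the inductive invariant on the attention dynamics to go through --- zero population error then follows deterministically from the learned attention concentration, not from a separate uniform-convergence step. Third, the inductive invariant itself (attention mass on label-relevant tokens non-decreasing, value vectors aligning, all error sources controlled uniformly over $N$ samples, $m$ neurons, and $T$ steps) is precisely where the source's technical work lives, and your proposal leaves it as a stated intention. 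As a roadmap to the cited proof your sketch is serviceable; as a proof it is not one, and for this paper's purposes the correct move is simply to cite the result, as the authors do.
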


\begin{proposition*}[Generalization without Self-Attention, \cite{li2023a}]
Without self-attention, achieving zero error requires \(N \geq \Omega\left((\alpha_*(\alpha_* - \sigma - \tau))^{-2}\right)\), demonstrating ViT's sample complexity reduction by \(1/\alpha_*^2\).
\end{proposition*}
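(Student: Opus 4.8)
The plan is to specialize the generalization analysis behind the with-attention theorem of \citet{li2023a} to the degenerate predictor obtained by deleting the self-attention block, and to track how the \emph{effective discriminative signal} shrinks as a consequence. Removing the softmax replaces token selection by a uniform average, so the model becomes
\[
F(\bm{X}^n) = \frac{1}{|\mathcal{S}^n|} \sum_{l \in \mathcal{S}^n} \bm{a}_{(l)}^\top \mathrm{ReLU}\!\left(\bm{W}_O \bm{W}_V \bm{x}_l^n\right),
\]
in which every token contributes with equal weight, whether it is a noisy copy of a discriminative pattern $\bm{\mu}_1, \bm{\mu}_2$ or of a label-irrelevant pattern. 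Working under the same data model and the initial-model conditions (Assumption 1), the goal is to show that driving the generalization error to zero forces $N \geq \Omega\big((\alpha_*(\alpha_* - \sigma - \tau))^{-2}\big)$, and then to compare against the attention-based threshold $N \geq \Omega(\alpha_*^{-2})$ to read off the $1/\alpha_*^2$ gain.

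First I would isolate the useful component of the gradient and show that it carries an extra dilution factor of $\alpha_*$. In the attention model, training concentrates the softmax on the $\alpha_* L$ label-relevant tokens, and the network attains a positive margin whose leading order is the quantity $(\alpha_* - c'(1-\zeta) - c''(\sigma + \tau))$ appearing in the with-attention bound. Without attention this amplification is unavailable: the relevant tokens enter the uniform average with total weight only $\alpha_* = |\{\text{relevant}\}| / |\mathcal{S}^n|$, so the projection of $\nabla_{\bm{W}_V} F$ (and hence of the achievable margin) onto the discriminative direction $\bm{\mu}_1 - \bm{\mu}_2$ is scaled by an additional factor $\alpha_*$. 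I would make this precise by evaluating the gradient at the NTK-initialized weights and separating the contribution of relevant tokens, which align with $\bm{\mu}_1, \bm{\mu}_2$, from that of noisy and label-irrelevant tokens, which under the budget $\tau < \kappa/4$ and $\sigma = O(1/M)$ do not cancel but contaminate the update at order $\Theta(\sigma + \tau)$. The net effective signal is then $\alpha_* \cdot (\alpha_* - \sigma - \tau)$: the dilution factor $\alpha_*$ times the residual margin $(\alpha_* - \sigma - \tau)$ that survives the contamination. Notably, since the attention-free model has no key or query matrices, the initialization error $\delta$ drops out and only $\sigma$ and $\tau$ remain, consistent with the form of the target bound.

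Next I would convert this signal strength into a sample requirement through concentration. For SGD to make consistent progress toward the discriminative subspace rather than to amplify sampling noise, the empirical (mini-batch) gradient must approximate the population gradient to within the signal scale; by a standard vector Hoeffding argument the empirical-population deviation is $\Theta(1/\sqrt{N})$. Demanding that this deviation be dominated by the effective signal gives $1/\sqrt{N} \lesssim \alpha_*(\alpha_* - \sigma - \tau)$, i.e. $N \geq \Omega\big((\alpha_*(\alpha_* - \sigma - \tau))^{-2}\big)$. Feeding this back into the generalization bound used for the with-attention theorem, with the margin now capped at $\Theta(\alpha_*(\alpha_* - \sigma - \tau))$, the test error vanishes only once $N$ meets this threshold, which establishes the stated sample complexity. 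Dividing by the attention rate $\Omega(\alpha_*^{-2})$ then exhibits the claimed $1/\alpha_*^2$ reduction, since the extra dilution factor $\alpha_*$ is squared inside the inverse-square bound.

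The main obstacle is the gradient-tracking step: I must show rigorously that, absent the softmax, the extra factor of $\alpha_*$ genuinely appears, i.e. that the irrelevant tokens cannot be suppressed and so enter the learned features at order $\alpha_*$ rather than being driven toward zero as attention does. This requires following the dynamics of $\bm{W}_V$ (equivalently of the combined map $\bm{W}_O \bm{W}_V$) under the attention-free forward model and verifying that the relevant-token alignment grows while the irrelevant-token contamination stays at the $\Theta(\sigma + \tau)$ level throughout training, so that the quadratic dependence on $\alpha_*$ is a genuine feature of the dynamics and not an artifact of a loose bound. The concentration and generalization steps, by contrast, transfer essentially verbatim from the analysis underlying the with-attention theorem.
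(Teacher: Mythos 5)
The first thing to note is that the paper never proves this statement: it is quoted in Appendix~\ref{app:sec_theory_background} as imported background from \citet{li2023a}, stated without proof alongside the with-attention theorem, and the only result the paper itself proves is its Proposition~1 comparing $\alpha_*^{\text{2D}}$ and $\alpha_*^{\text{1D}}$. So there is no in-paper proof to compare your attempt against; the yardstick is the analysis in \citet{li2023a}. Judged on its own terms, your sketch gets the intuition right --- deleting the softmax makes every token enter the pooled output with uniform weight, so the usable margin is diluted by a factor $\alpha_*$ on top of the residual $(\alpha_* - \sigma - \tau)$, and $\delta$ drops out because there are no key/query matrices --- but it does not amount to a proof, for two concrete reasons.

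First, the crux is exactly the step you defer: showing that under the attention-free dynamics the relevant-token alignment grows while the irrelevant-token contamination stays at $\Theta(\sigma+\tau)$, so that the effective signal is genuinely $\alpha_*(\alpha_*-\sigma-\tau)$ rather than something this heuristic over- or under-counts. In \citet{li2023a} this is the bulk of the work (a full feature-learning analysis of the SGD trajectory, not a one-shot evaluation of the gradient at NTK initialization), and your proposal explicitly labels it ``the main obstacle'' without resolving it; as written, $\alpha_*(\alpha_*-\sigma-\tau)$ is an ansatz reverse-engineered from the target bound. Second, your concentration step points in the wrong logical direction for the claim as you read it. A Hoeffding argument that the empirical gradient tracks the population gradient once $1/\sqrt{N} \lesssim \alpha_*(\alpha_*-\sigma-\tau)$ yields a \emph{sufficient} sample size for the analysis to go through; it does not show that zero generalization error is \emph{impossible} below that threshold, which is what ``requires'' would demand and which would need a separate impossibility argument. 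In \citet{li2023a} the statement is in fact a sufficiency-style guarantee for the attention-free model, and the advertised $1/\alpha_*^2$ reduction is a comparison of the two sufficient sample sizes produced by the same proof technique (your final ratio computation is correct on that reading). Your plan should either be recast in that sufficiency form --- rerunning the with-attention convergence proof with the diluted margin --- or be honest that the necessity version is not established by signal-versus-noise dominance.
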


\begin{table}[!t]
\centering
\caption{Key Notations}\label{tab:notations}
\begin{tabular}{ll}
\toprule
Notation & Description \\
\midrule
\(\alpha_*\) & Fraction of label-relevant tokens \\
\(\sigma, \delta, \tau\) & Initialization/token noise parameters \\
\(\kappa\) & Minimum pattern distance \\
\(M\) & Total number of patterns \\
\bottomrule
\end{tabular}
\end{table}

\subsection{Proof of label relevance in 2D patches}

We remind Proposition 1 from the main paper and provide a detailed proof.

\setcounter{proposition}{0}

\label{app:sec_theory_proof}
\begin{proposition}
For an arbitrary $\bm{\mu}_1, \bm{\mu}_2 \in \mathbb{R}^k$, let $\bm{t} = [\bm{x}_1 \ \ \bm{x}_2 \ \ \cdots \ \ \bm{x}_k ]^\top \in \mathbb{R}^T \text{where } \forall i \in [k], \bm{x}_i \in \mathbb{R}^k$ and either $\bm{x}_i = \bm{\mu}_1$ or $\bm{x}_i = \bm{\mu}_2$ with $\bm{\mu}_2$ being a label-relevant pattern. Let $\left| \left\{ i : \bm{x}_i = \boldsymbol{\mu}_2 \right\} \right| = n'$ and assume that $2\bm{x}'\cdot (\bm{\mu}_1 - \bm{\mu}_2) \leq ||\bm{\mu}_1 ||^2 - ||\bm{\mu}_2||^2$ whenever $\left| \left\{ i : x'_i \in \boldsymbol{\mu}_2 \right\} \right| \geq \sqrt{k}$. Then, it holds that  $$\alpha_*^{\text{2D}} \geq \alpha_*^{\text{1D}}=\frac{n'}{k},$$ and the inequality is strict if $n' \text{ mod } \sqrt{k} > 0.$
\end{proposition}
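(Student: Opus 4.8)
The plan is to reduce both $\alpha_*^{\text{1D}}$ and $\alpha_*^{\text{2D}}$ to counts of label-relevant tokens and then compare these counts through a ceiling inequality. First I would dispose of the 1D case: each 1D token equals either $\bm{\mu}_1$ or $\bm{\mu}_2$, and since $\bm{\mu}_2$ is the label-relevant pattern, a token is label-relevant exactly when it equals $\bm{\mu}_2$ (a $\bm{\mu}_1$ token satisfies the relevance inequality only if $\bm{\mu}_1=\bm{\mu}_2$, by expanding $\|\bm{x}-\bm{\mu}_2\|\le\|\bm{x}-\bm{\mu}_1\|$). As $n'$ of the $k$ tokens equal $\bm{\mu}_2$, we get $\alpha_*^{\text{1D}}=n'/k$ immediately.

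The heart of the argument is a careful reading of the 2D patch structure. I would observe that the rows of the reshaped $k\times k$ matrix are exactly $\bm{x}_1,\dots,\bm{x}_k$, and that the $k$ square patches tile the matrix into $\sqrt{k}$ horizontal bands of $\sqrt{k}$ consecutive rows, with $\sqrt{k}$ patches sharing each band. Every patch in a fixed band is therefore built from the same $\sqrt{k}$ rows, so if that band contains $r$ rows equal to $\bm{\mu}_2$, each flattened patch $\bm{x}'$ in it has exactly $r\sqrt{k}$ coordinates inherited from $\bm{\mu}_2$ rows, i.e. $\left|\{i : x'_i \in \bm{\mu}_2\}\right| = r\sqrt{k}$. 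I would then invoke the assumption: whenever a band holds at least one $\bm{\mu}_2$ row ($r\ge 1$) we have $\left|\{i : x'_i \in \bm{\mu}_2\}\right|\ge\sqrt{k}$, so $2\bm{x}'\cdot(\bm{\mu}_1-\bm{\mu}_2)\le\|\bm{\mu}_1\|^2-\|\bm{\mu}_2\|^2$, which after expanding the squared norms is precisely $\|\bm{x}'-\bm{\mu}_2\|\le\|\bm{x}'-\bm{\mu}_1\|$, making the patch label-relevant. Thus an entire band of $\sqrt{k}$ patches becomes label-relevant as soon as it contains a single $\bm{\mu}_2$ row.

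Finally I would turn this into a count. Letting $b$ be the number of bands containing at least one $\bm{\mu}_2$ row, the above yields at least $b\sqrt{k}$ label-relevant patches out of $k$, so $\alpha_*^{\text{2D}}\ge b/\sqrt{k}$. Since the $n'$ rows equal to $\bm{\mu}_2$ must occupy at least $\lceil n'/\sqrt{k}\rceil$ bands (each band holds only $\sqrt{k}$ rows), we have $b\ge\lceil n'/\sqrt{k}\rceil$ for every arrangement of the rows, whence
\[
\alpha_*^{\text{2D}}\ \ge\ \frac{\lceil n'/\sqrt{k}\rceil}{\sqrt{k}}\ \ge\ \frac{n'/\sqrt{k}}{\sqrt{k}}\ =\ \frac{n'}{k}\ =\ \alpha_*^{\text{1D}}.
\]
The second inequality is strict exactly when $n'/\sqrt{k}$ is not an integer, i.e. when $n'\bmod\sqrt{k}>0$, giving the claimed strict case.

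The main obstacle I anticipate is the bookkeeping around the assumption's counting functional $\left|\{i : x'_i \in \bm{\mu}_2\}\right|$: one must recognise that it counts \emph{coordinates} rather than rows, so that it equals $\sqrt{k}$ times the number of full $\bm{\mu}_2$ rows in a patch's band, and hence that the threshold $\sqrt{k}$ corresponds to ``at least one $\bm{\mu}_2$ row''. The only other delicate point is establishing $b\ge\lceil n'/\sqrt{k}\rceil$ uniformly over all placements of the $\bm{\mu}_2$ rows, which is a pigeonhole bound from packing as many $\bm{\mu}_2$ rows as possible into each band; this worst-case packing is precisely the configuration that makes the inequality tight when $\sqrt{k}\mid n'$.
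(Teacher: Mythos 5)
Your proof is correct and follows essentially the same route as the paper's: derive the label-relevance condition $2\bm{x}'\cdot(\bm{\mu}_1-\bm{\mu}_2)\leq\|\bm{\mu}_1\|^2-\|\bm{\mu}_2\|^2$, observe that a single $\bm{\mu}_2$ row makes all $\sqrt{k}$ patches in its horizontal band label-relevant, and then count bands. Your pigeonhole bound $b\geq\lceil n'/\sqrt{k}\rceil$ is a tidier formalization of the paper's two-case worst-case analysis ($n'=c\sqrt{k}$ versus $n'=c\sqrt{k}+b$), but the underlying argument is the same.
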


\begin{proof}
For a token \(\bm{x}'^n\) to be label-relevant (aligned with \(\bm{\mu}_2\)), it must satisfy:
\[
\|\bm{x}'^n - \bm{\mu}_2\| \leq \|\bm{x}'^n - \bm{\mu}_1\|.
\]

Expanding both sides, we have that:
\[
\|\bm{x}'^n\|^2 + 2\bm{x}'^n \cdot \bm{\mu}_1 + \|\bm{\mu}_1\|^2 \leq \|\bm{x}'^n\|^2 - 2\bm{x}'^n \cdot \bm{\mu}_2 + \|\bm{\mu}_2\|^2.
\]
Regrouping the terms gives us the desired condition:
\[
2\bm{x}'^n \cdot (\bm{\mu}_1 - \bm{\mu}_2) \leq ||\bm{\mu}_1||^2 - ||\bm{\mu}_2||^2. \tag{1}
\]

Recall that $n'$ denotes the number of segments of $\bm{\mu}_2$ in time series $\bm{t}$. Each such segment spans $\sqrt{k}$ tokens, contributing at least $\sqrt{k}$ elements to each of them. Under the assumption of the proposition, it implies (1) and makes each of these $\sqrt{k}$ tokens label-relevant. 

We now need to carefully consider how the $\bm{\mu}_2$ segments can be placed within $\bm{t}$ to understand how many tokens become label-relevant thanks to each $\bm{\mu}_2$. We consider two cases: 1) $n' = c\sqrt{k}$ for some $c \in \mathbb{N}$ satisfying $n' \in (0,k]$, and 2) $n' = c\sqrt{k} + b$ for some $a,b \in \mathbb{N}$, $\sqrt{k}>b>0$ such that $n' \in (0,k]$. In the first case, $\alpha_*^{\text{1D}}=c\sqrt{k}/k$. In the case of 2D patching, in the worst case, $\bm{\mu}_2$ segments can be placed such that they will contribute to $c\sqrt{k}$ tokens. In this case, $\alpha_*^{\text{2D}}\geq c\sqrt{k}/k$ and $\alpha_*^{\text{1D}} \leq \alpha_*^{\text{2D}}$. If $n'$ is not a multiple of $\sqrt{k}$, the same analysis applies for the $c\sqrt{k}$ segments of $\bm{\mu}_2$. To account for the remainder $b$, we note that for any $b>0$, in 2D case, it adds $\sqrt{k}$ label-relevant tokens to the fraction $\alpha_*^{\text{2D}}$ so that $\alpha_*^{\text{2D}}\geq \frac{c\sqrt{k}+\sqrt{k}}{k}$. In the case of 1D patching, $\alpha_*^{\text{1D}} = \frac{c\sqrt{k}+b}{k}$. Given that $b<\sqrt{k}$, this concludes the proof. 
\end{proof} 

\subsection{Additional illustrations of Proposition 1}
\label{app:sec_theory_examples}
To illustrate the benefits of 2D modeling and patching, we present several examples of time series in Figure \ref{fig:ex_prop1}. We define $\bm{\mu}_1$ using functions such as log, cosine, and sine. We then set $\bm{\mu}_2=\bm{1}_k$, $n'=3$ and randomly shuffle $\bm{\mu}_1$ and $\bm{\mu}_2$ segments within the generated input time series.

\begin{figure}[!t]
    \centering
    \begin{minipage}[b]{.65\linewidth}
        \includegraphics[width=\linewidth]{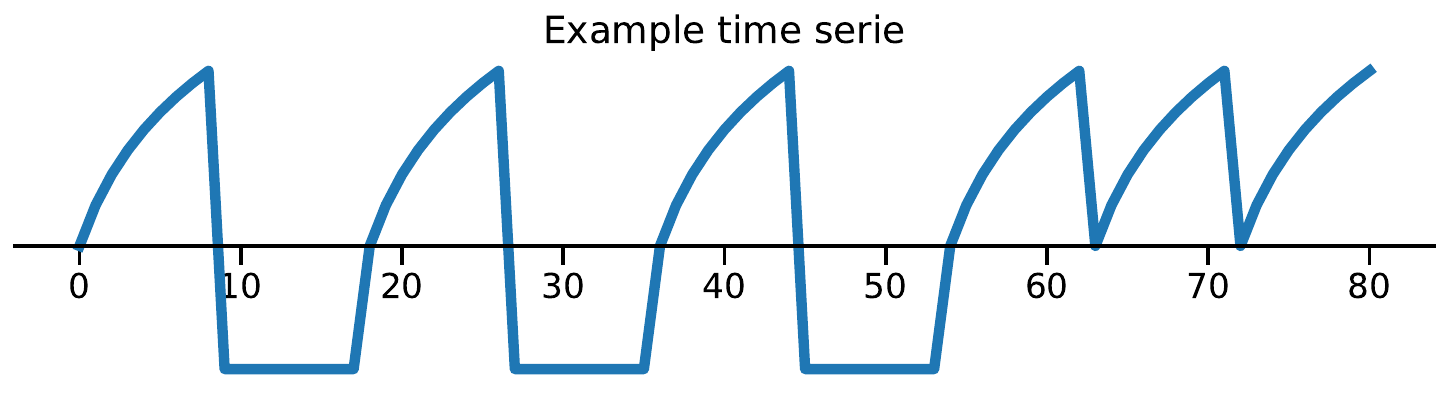}
        \includegraphics[width=\linewidth]{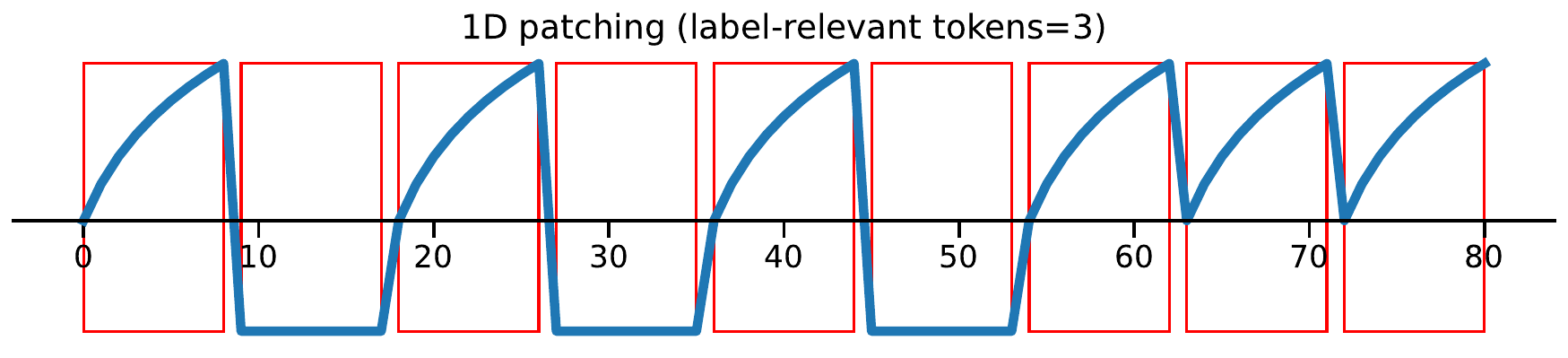}
    \end{minipage}
    \begin{minipage}[b]{.3\linewidth}
        \includegraphics[width=.9\linewidth]{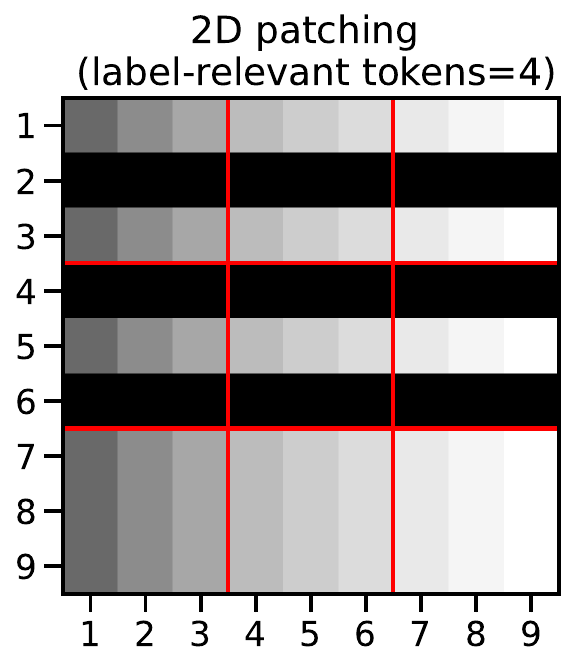}
     \end{minipage}
     \vspace{.5cm}
     
     \begin{minipage}[b]{.65\linewidth}
        \includegraphics[width=\linewidth]{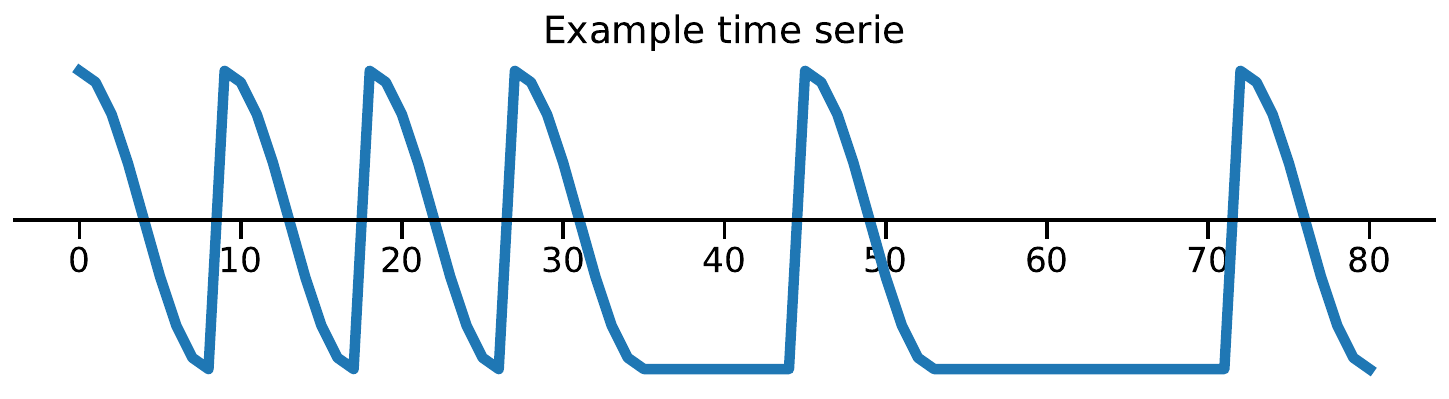}
        \includegraphics[width=\linewidth]{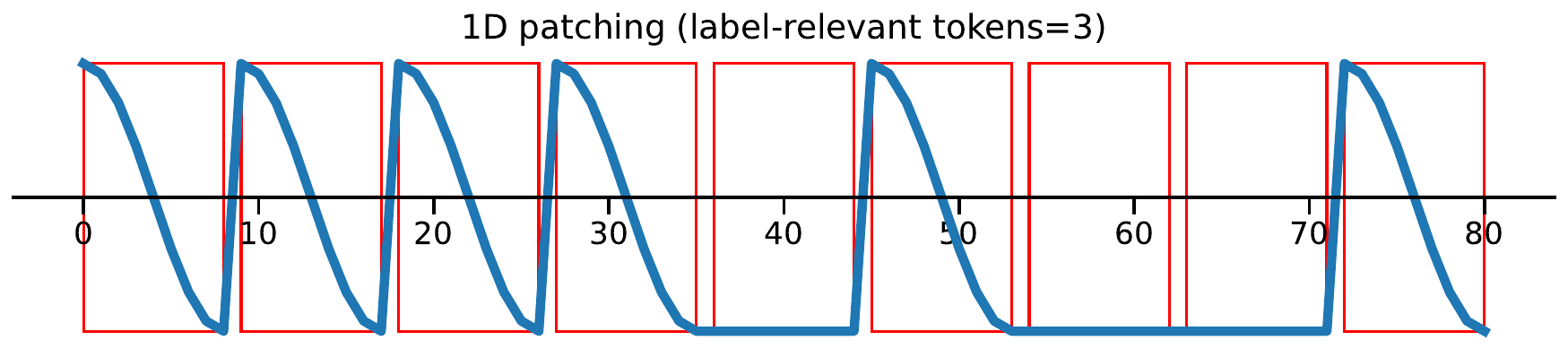}
    \end{minipage}
    \begin{minipage}[b]{.3\linewidth}
        \includegraphics[width=.9\linewidth]{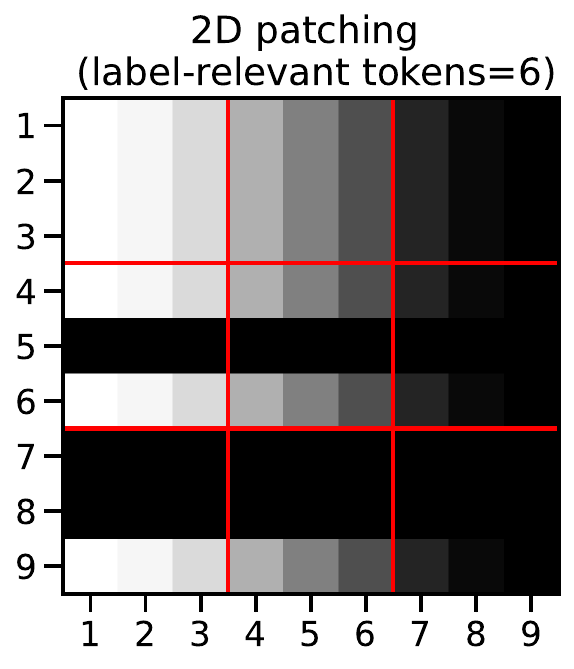}
     \end{minipage}
     \vspace{.5cm}
     
     \begin{minipage}[b]{.65\linewidth}
        \includegraphics[width=\linewidth]{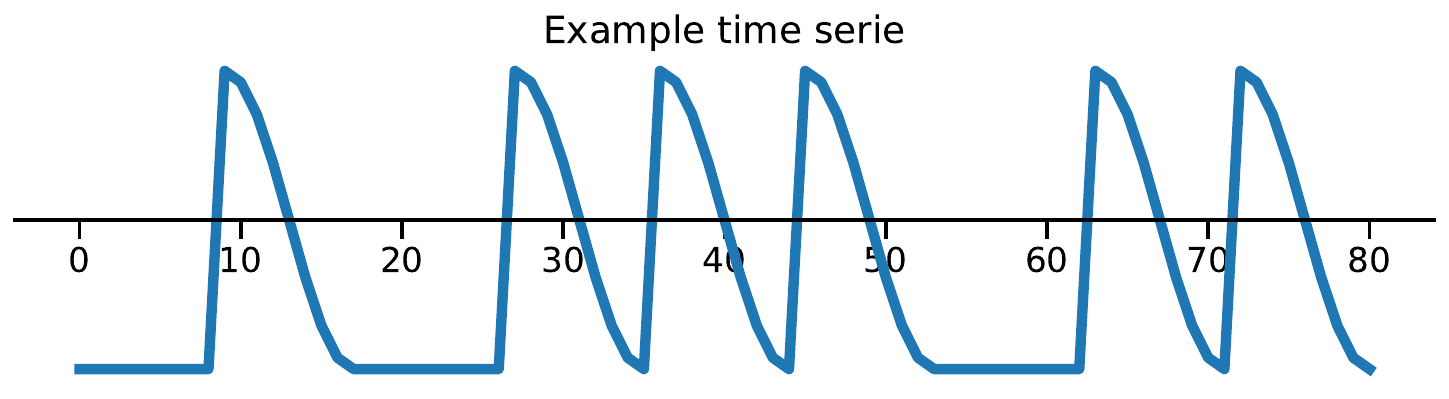}
        \includegraphics[width=\linewidth]{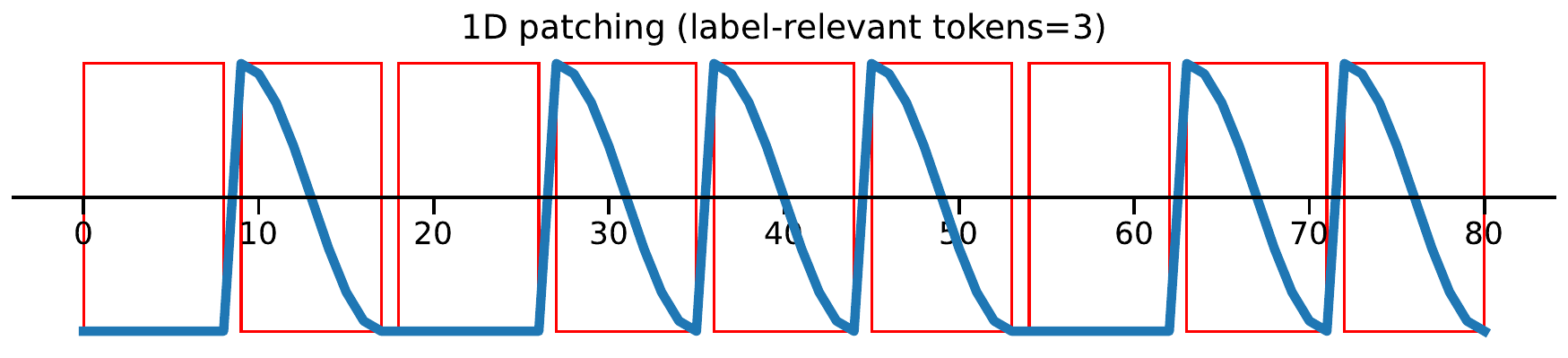}
    \end{minipage}
    \begin{minipage}[b]{.3\linewidth}
        \includegraphics[width=.9\linewidth]{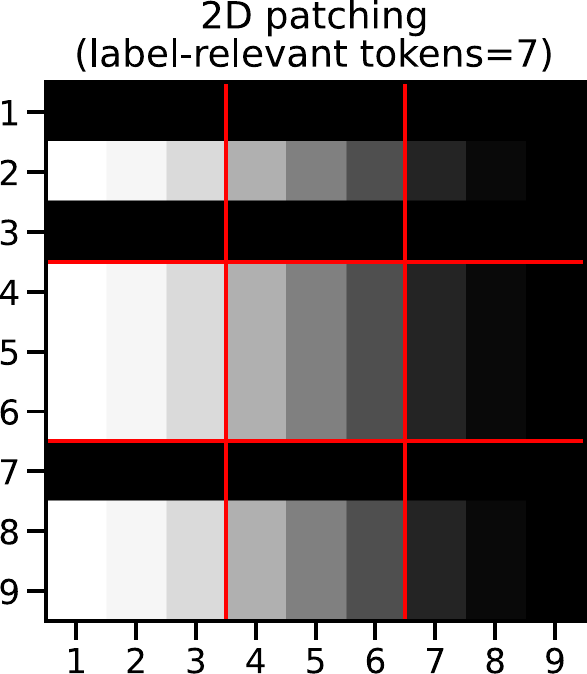}
     \end{minipage}
     \vspace{.5cm}
     
     \begin{minipage}[b]{.65\linewidth}
        \includegraphics[width=\linewidth]{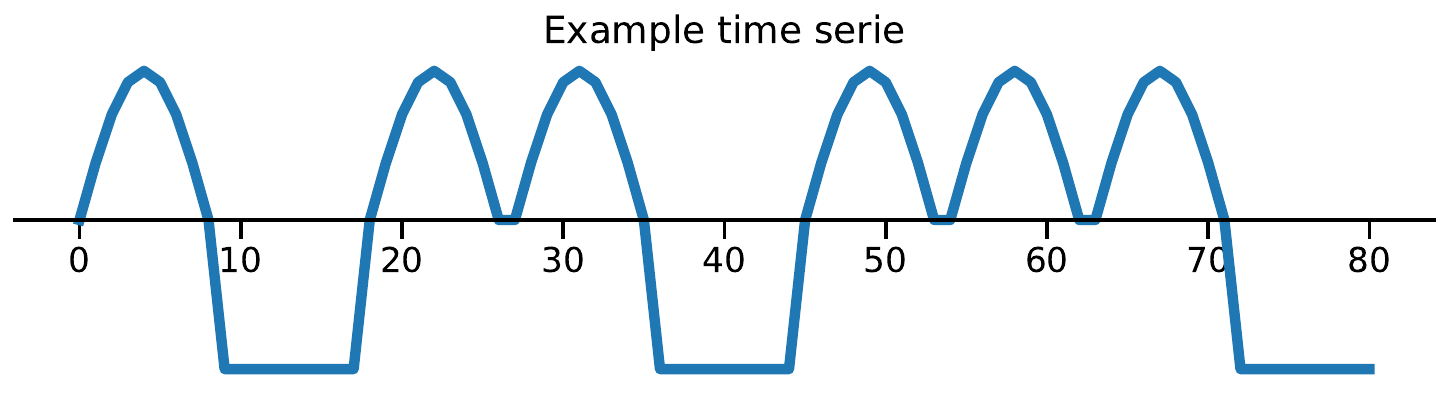}
        \includegraphics[width=\linewidth]{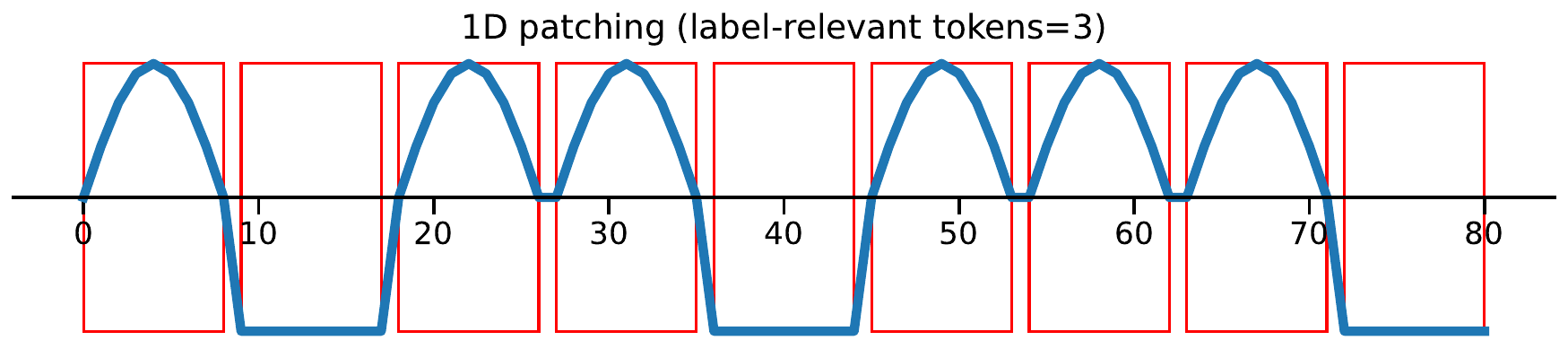}
    \end{minipage}
    \begin{minipage}[b]{.3\linewidth}
        \includegraphics[width=.95\linewidth]{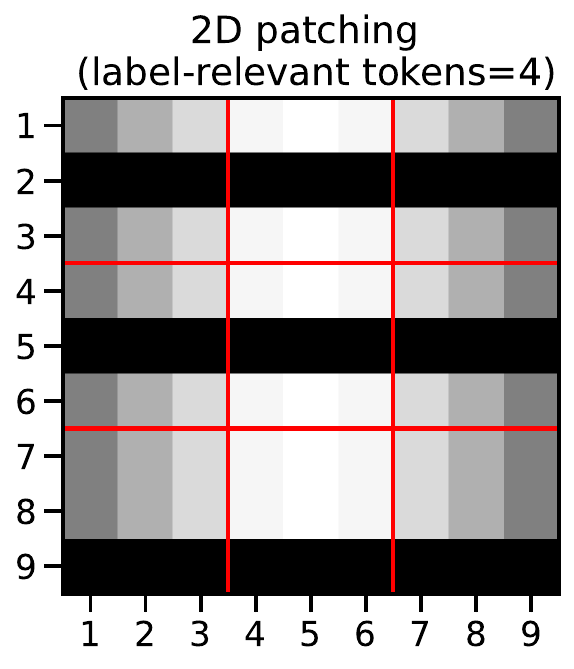}
     \end{minipage}
    \caption{Illustration of Proposition 1 on more generated time series. In each example considered, 2D patching is more beneficial due the higher number of label-relevant tokens.}
    \label{fig:ex_prop1}
\end{figure}

\newpage

\section{Details on the comparison of 1D and 2D patching for Transformers}
\label{app: 1d 2d comparison}
\subsection{Architecture and pretraining}

To evaluate the effect of 1D versus 2D patching on representations learned by Transformers, we fix the Transformer architecture and pretraining strategy, and only change the patching approach for generating input tokens. We adopt the setup of \citet{feofanov2025mantis} since their Transformer block implementation (ViTUnit class \href{https://github.com/vfeofanov/mantis/blob/main/src/mantis/architecture/architecture.py}{here}) for time series classification is similar to the classical ViT. Specifically, the model comprises 6 Transformer layers, each with 8 attention heads and an embedding dimension of 256.

For pretraining, we employ contrastive learning following 
\citep{feofanov2025mantis,he2020momentum}. The augmentation technique to generate positive pairs is RandomCropResize with a crop rate varying within $[0\%,20\%]$. All time series are resized to a fixed length $T = 512$ using interpolation.

We examine both non-overlapping and overlapping patches following \citep{goswami2024moment,nie2022patchtst}. For non-overlapping 1D patching, we generate 32 patches of size 16. For non-overlapping 2D patching, we first arrange the 1D patches in a matrix of size $32 \times 16$ and then extract 32 patches of size $2 \times 8$. After flattening, we obtain 32 patches of size 16, similar to the 1D setting, but semantically different. For overlapping 1D patching, we apply a stride of 8, which yields 64 patches of size 16. For overlapping 2D patching, we rearrange these 1D patches again in a matrix of size $64 \times 16$ and then extract 32 patches of size $4 \times 8$. Flattening yields 32 patches of size 32.

\subsection{Dataset}
To pretrain the different models, we first generate a pretraining dataset from publicly available datasets that are not part of the evaluation benchmark. In detail, we consider a concatenation of the following datasets: ECG~\citep{clifford2017ecgdataset}, EMG~\citep{goldberger2000emgdataset}, Epilepsy~\citep{andrzejak2001epilepsydataset}, FD-A and FD-B \citep{lessmeier2016fdafdbdataset}, Gesture~\citep{liu2009gesturedataset}, HAR~\citep{anguita2013hardataset}, SleepEEG~\citep{kemp2000sleepeegdataset}. To reduce computation time, we construct a subset of the full dataset containing 100~000 samples, with a sufficiently balanced distribution across the individual source datasets. We give more details in Table \ref{tab:pretrainig-data-tsfm} on how many samples were taken from each dataset to form the pretraining corpus.

\begin{table}
    \centering
    \caption{Data used to pretrain Transformers for comparison of 1D and 2D patching.}
    \begin{tabular}{lccc}
    \toprule
     Dataset & Number of examples & Prop. of taken examples \\
    \midrule
    ECG & 20835 & $45.7\%$ \\
    EMG & 163 & $100\%$ \\
    Epilepsy & 11480 & $100\%$ \\
    FD-A & 10912 & $100\%$ \\
    FD-B & 13619 & $100\%$ \\
    Gesture & 1320 & $100\%$ \\
    HAR & 20835 & $78.7\%$ \\
    SleepEEG & 20836 & $4.5\%$ \\
    \bottomrule
    \end{tabular}
    \label{tab:pretrainig-data-tsfm}
\end{table}

\newpage 

\section{Additional analysis on TiViT}
\label{app: tivit additional analysis}

\subsection{Patch size and overlap}

In Section \ref{subsection: transforming time series into images for vit feature extraction}, we analyze the time series-to-image transformation for TiViT-CLIP and show that a patch size $P=\sqrt{T}$ and a stride $S = \frac{P}{10}$ yields high classification accuracy for any time series of length $T$.
Figure \ref{fig: stride vs accuracy} displays the effect of patch overlap for TiViT with CLIP, DINOv2, and SigLIP~2 backbones while fixing the patch size at $P = \sqrt{T}$. All versions of TiViT achieve high classification accuracy when utilizing an overlap of 0.9 (corresponding to stride $S = \frac{P}{10}$).

\begin{figure}
\centering
\includegraphics[width=0.6\textwidth]{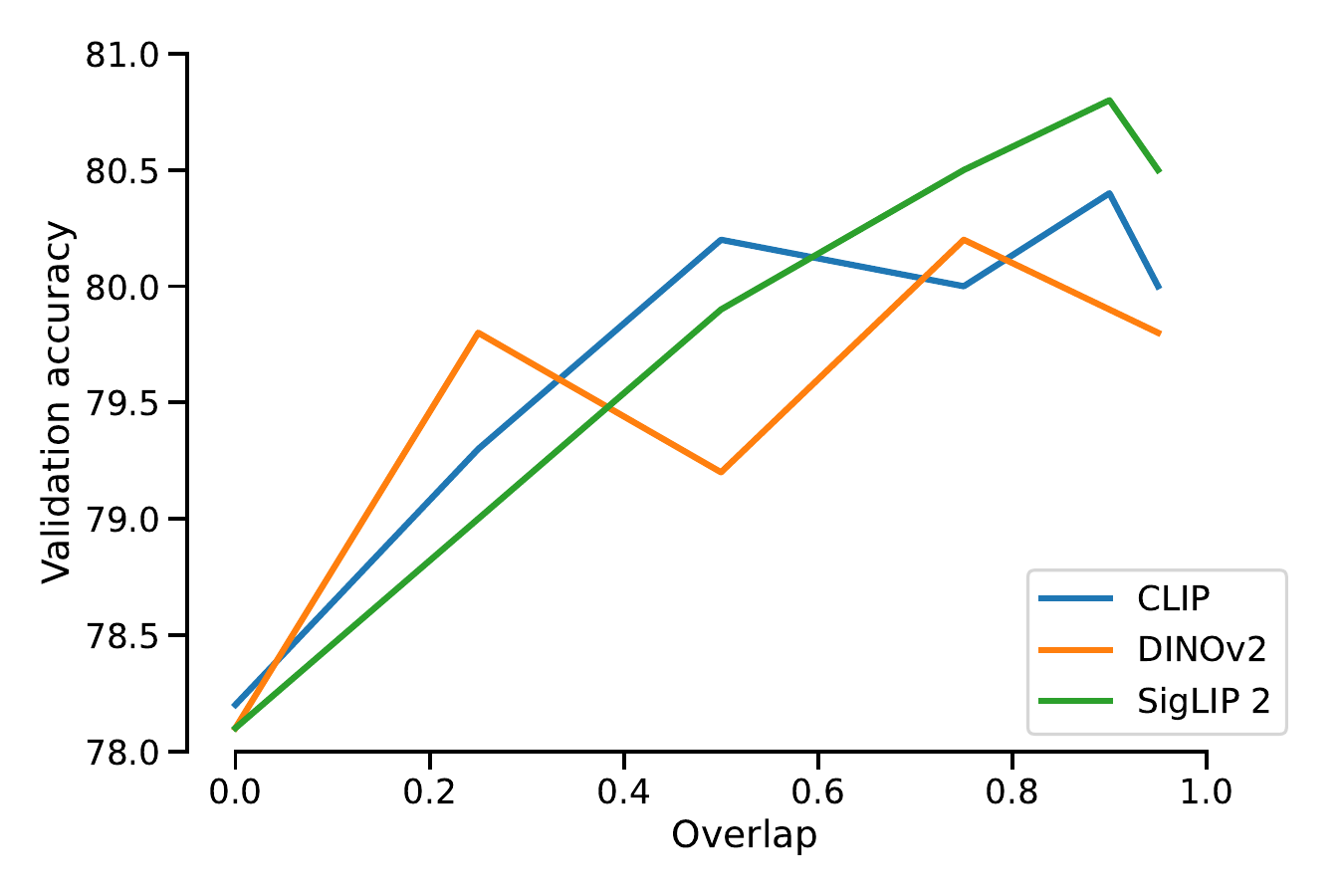}
\caption{Effect of patch overlap on the classification accuracy of TiViT with different backbones.}
\label{fig: stride vs accuracy}
\end{figure}

\begin{table}
\centering
\caption{Linear classification accuracy of TiViT on the UCR dataset with different ways of aggregating the hidden representations per layer. We report the total number of layers including the output layer and the index of the best performing layer starting from 0.}
\label{tab: aggregation ablation}
\begin{tabular}{@{}lccccc@{}}
\toprule
\multirow{2}{*}{Model} & \multirow{2}{*}{\# Layers} & \multicolumn{2}{c}{Average of tokens} & \multicolumn{2}{c}{CLS token}\\
\cmidrule(lr){3-4}
\cmidrule(lr){5-6}
&&  Layer & Acc & Layer & Acc \\
\midrule
TiViT-DINOv2 & 25 & 15 & 80.0 & 17 & 79.1 \\
TiViT-SigLIP 2 & 28 & 10 & 80.6 & 14 & 71.7\\
TiViT-CLIP & 33 & 14 & \textbf{81.3} & 18 & 78.6 \\
\bottomrule
\end{tabular}
\end{table}

\subsection{Aggregation of hidden token representations}
As described in Section \ref{subsection: time series to image transformation}, we obtain a single embedding for each time series by averaging the ViT hidden representations in a particular layer. We now evaluate the performance of TiViT when using the CLS token from each layer instead. Table \ref{tab: aggregation ablation} compares the linear classification performance on the UCR dataset using either the CLS token or the mean of all tokens. To ensure a fair comparison, we determine the best performing layer for each approach based on the validation accuracy. Across all backbones, the CLS token consistently results in lower test accuracy, confirming our choice to use the mean hidden representation in TiViT. Interestingly, the best performing CLS tokens appear in later layers compared to the best performing mean tokens. Therefore, utilizing the mean representations does not only enhance classification accuracy, but also reduce computational cost.

\begin{table}
\centering
\caption{Linear classification of TiViT-CLIP with varying size of the ViT backbone. For each model, we report the test accuracy on the UCR dataset achieved with the best performing hidden layer representation and the number of parameters up to this layer.}
\label{tab: vit size ablation}
\begin{tabular}{@{}lccc@{}}
\toprule
Architecture & Layer (total number) & Parameters & Accuracy\\
\midrule
ViT-B/32 & 8 (13) & 52 M & 79.8 \\
ViT-B/16 & 6 (13) & 36 M & 80.8 \\
ViT-L/14 & 10 (24) & 178 M & 80.3 \\
ViT-H/14 & 14 (32) & 257 M & \textbf{81.3} \\
\bottomrule
\end{tabular}
\end{table}

\begin{figure}[!t]
\centering
\begin{subfigure}[b]{0.33\textwidth}
\centering
\includegraphics[width=0.95\textwidth]{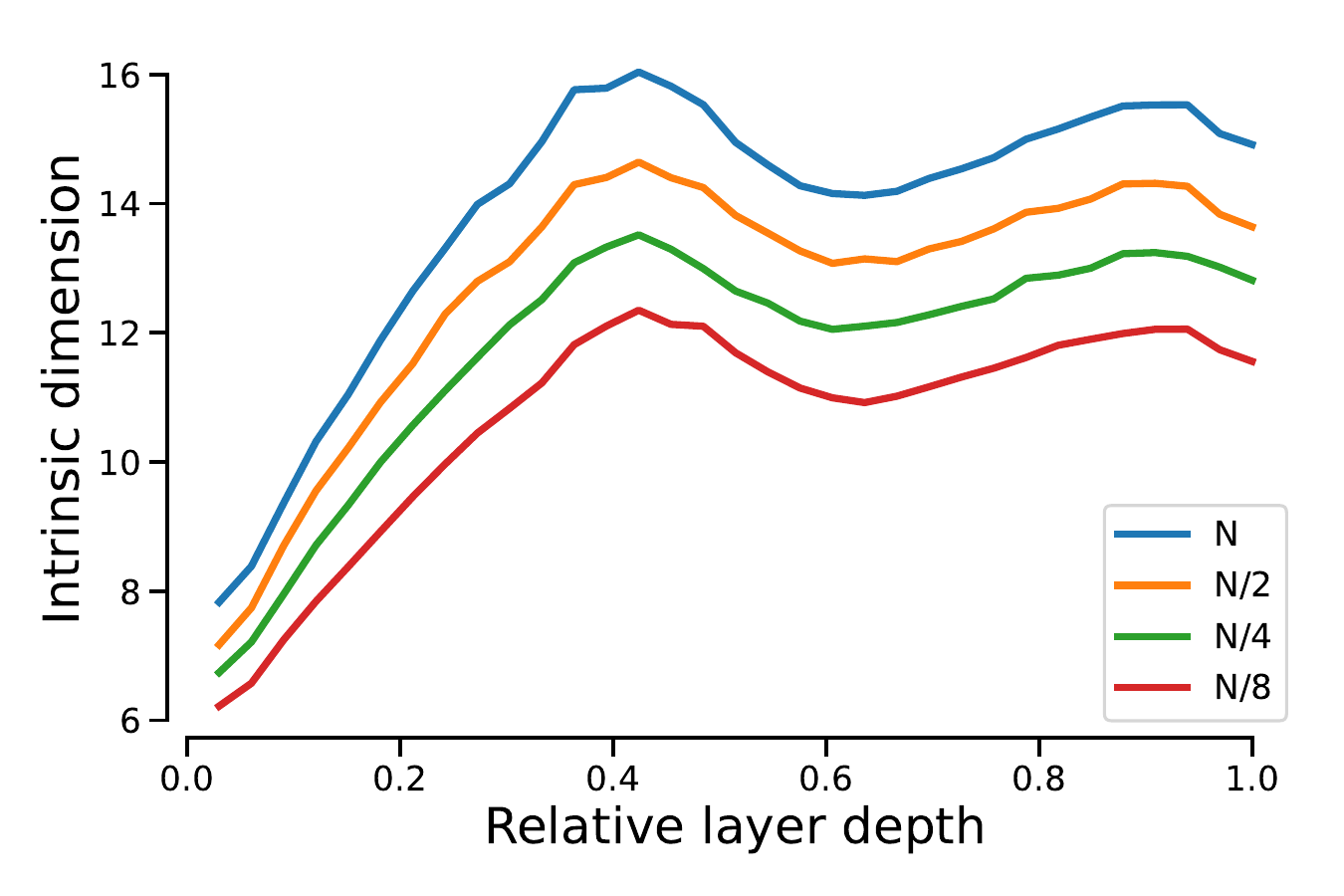}
\caption{TiViT-CLIP}
\label{fig: clip id}
\end{subfigure}%
\hfill
\begin{subfigure}[b]{0.33\textwidth}
\centering
\includegraphics[width=0.95\textwidth]{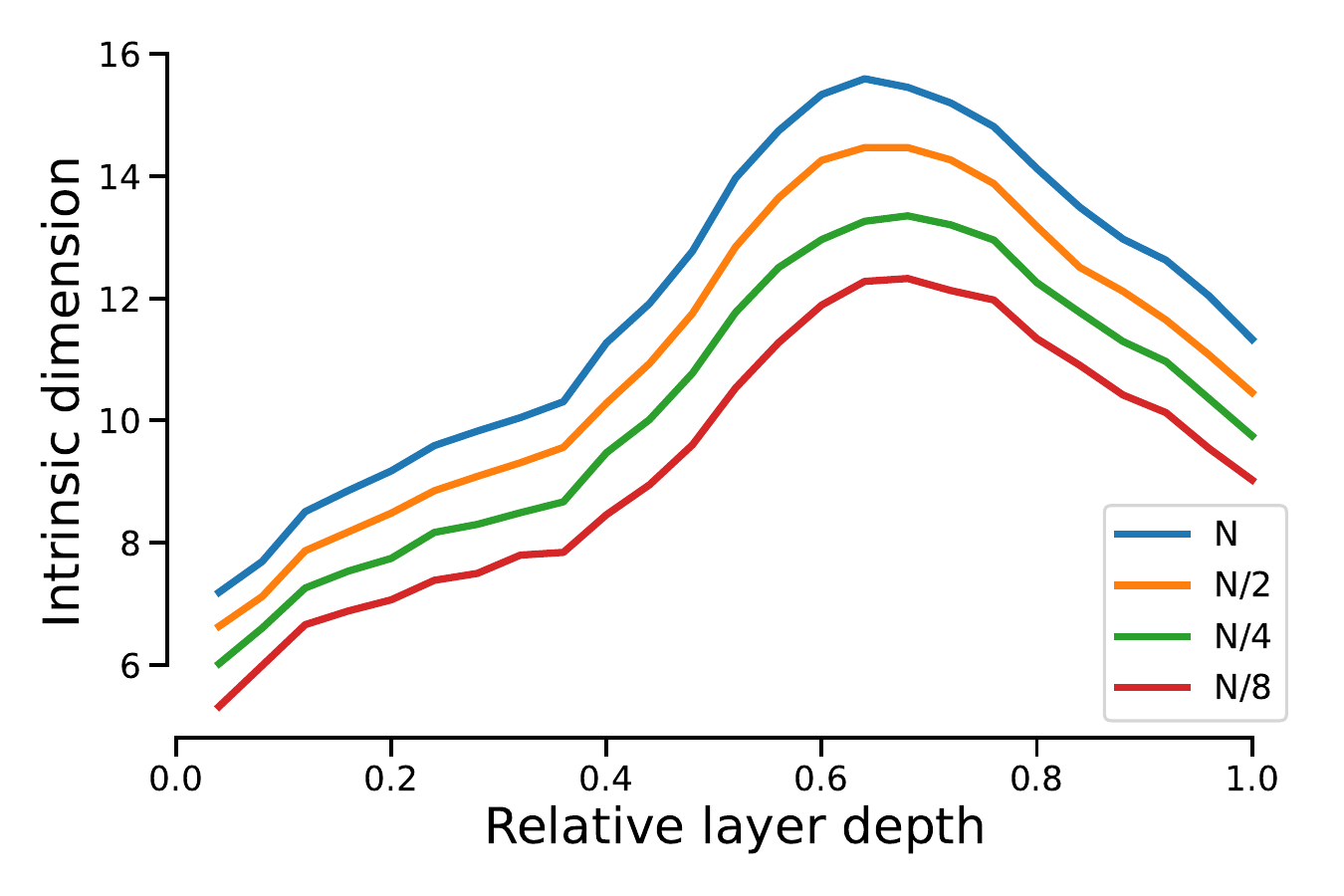}
\caption{TiViT-DINOv2}
\label{fig: dinov2 id}
\end{subfigure}%
\hfill
\begin{subfigure}[b]{0.33\textwidth}
\centering
\includegraphics[width=0.95\textwidth]{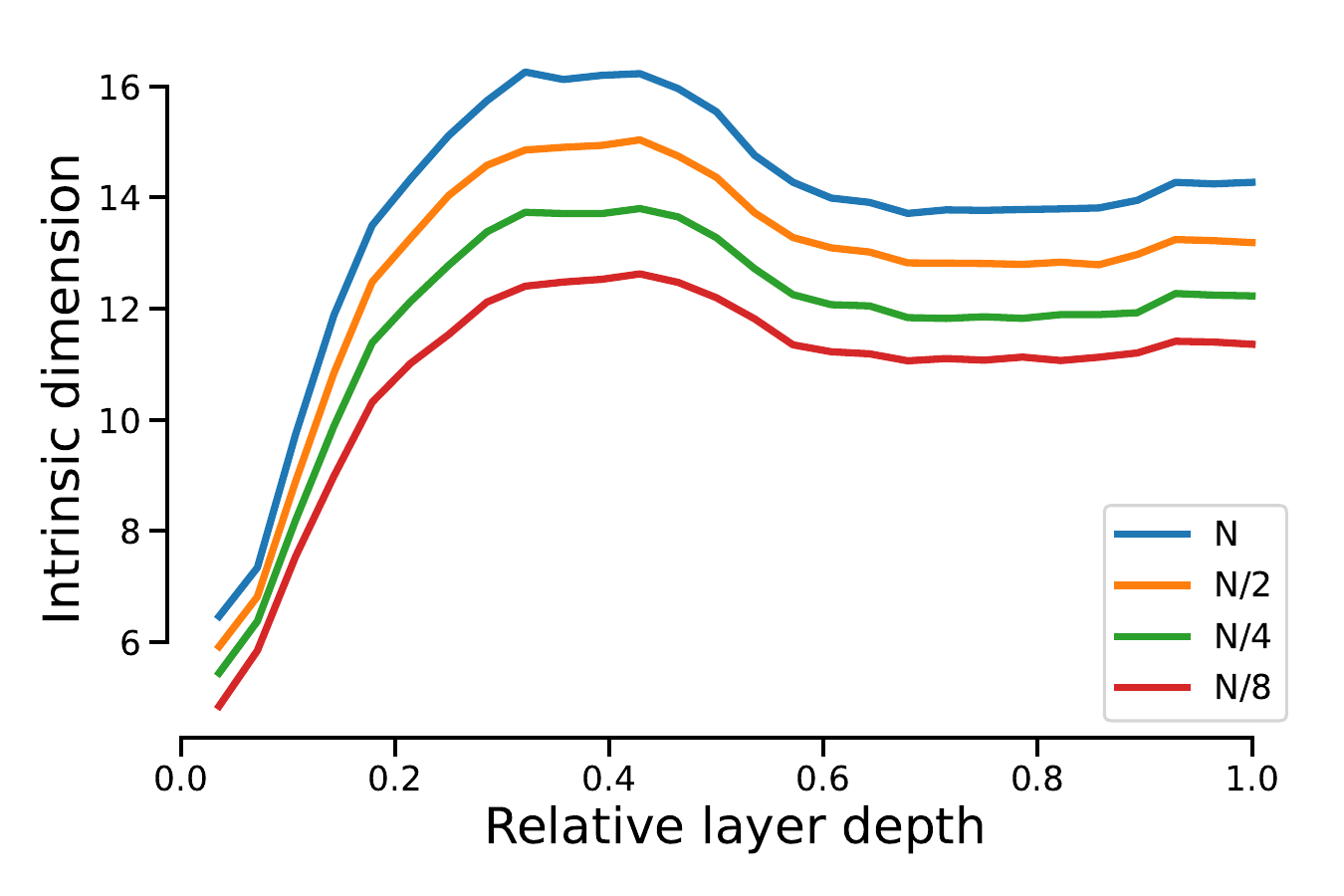}
\caption{TiViT-SigLIP 2}
\label{fig: siglip id}
\end{subfigure}%
\hfill
\caption{Intrinsic dimension of hidden representations per layer from CLIP, DINOv2, and SigLIP computed for subsamples of the dataset in $\{ {\color{NavyBlue}N}, {\color{Orange}\frac{N}{2}},{\color{ForestGreen}\frac{N}{4}}, {\color{BrickRed}\frac{N}{8}}\}$.}
\label{fig:intrinsic dimension subsampling}
\end{figure}

\begin{figure}
\centering
\includegraphics[width=0.6\textwidth]{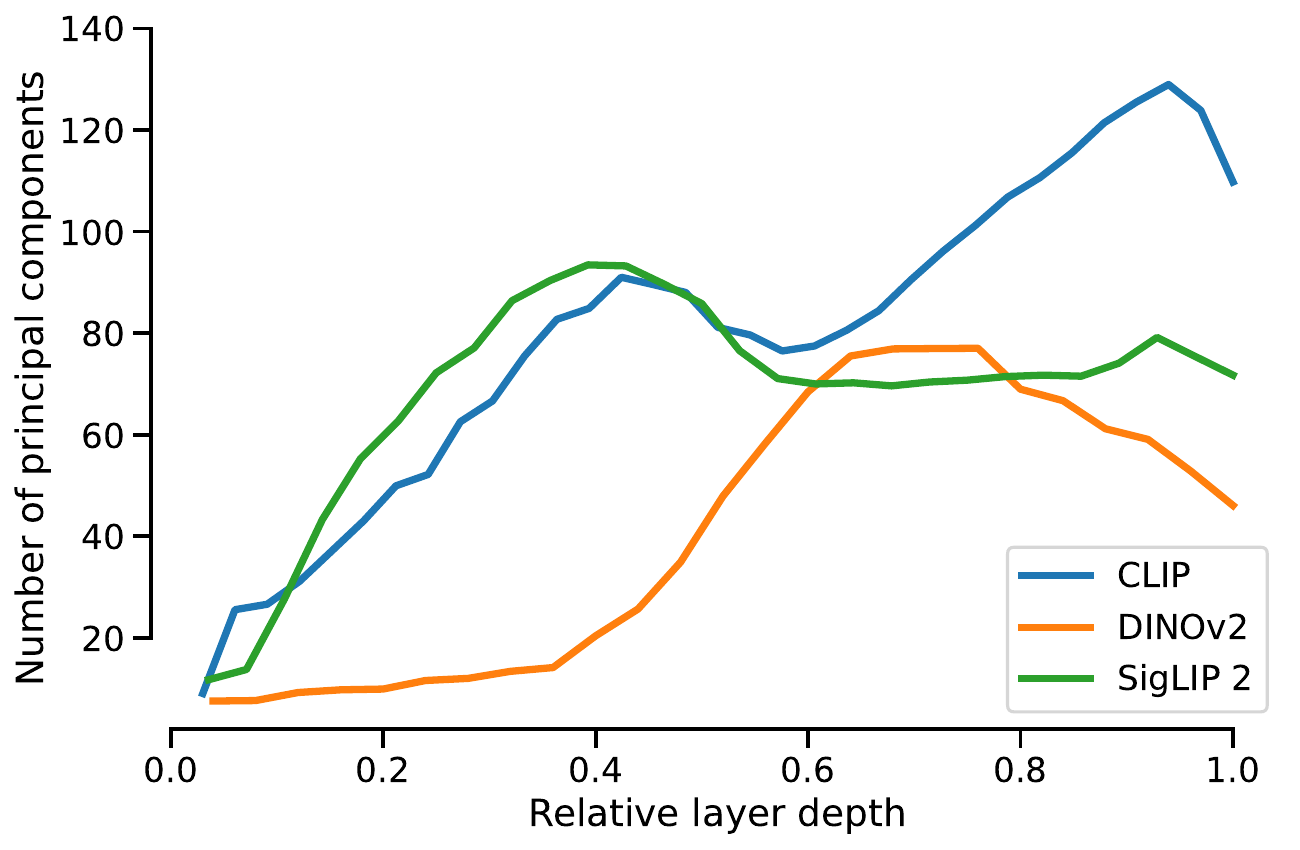}
\caption{Number of principal components necessary to cover 95\% of variance in the ViT representations per layer averaged across UCR datasets.}
\label{fig: principal components}
\end{figure}

\subsection{Intrinsic dimension and principal components of hidden representations}
\label{app: intrinsic dim and pca}
The intrinsic dimension quantifies the minimum number of variables required to represent a local neighborhood of samples in the representation space. To estimate the intrinsic dimension, the TWO-NN estimator introduced by \citet{facco2017estimating} leverages the distance of each data point to its first and second nearest neighbor. As noted by the authors, a larger number of data points reduces the average distance to the second neighbor, and thus increases the intrinsic dimension. To mitigate this effect, they propose to subsample the dataset. Given a dataset of size $N$, we report the intrinsic dimension for $\frac{N}{4}$ subsamples in the main paper, which is in line with \citet{valeriani2023geometry}. In Figure \ref{fig:intrinsic dimension subsampling}, we compare the intrinsic dimension of average representations from hidden layers using $N$, $\frac{N}{2}$, $\frac{N}{4}$, and $\frac{N}{8}$ samples for estimation. The layer with the highest intrinsic dimension, which is central to our analysis, remains the same regardless of the subsampling ratio.

Since the intrinsic dimension only characterizes the local geometry of the representation space, we further provide a global analysis using principal components. Specifically, in Figure \ref{fig: principal components}, we determine the number of principal components that are necessary to cover 95\% of the variance in the data. For DINOv2, we observe a peak in the number of principal components in the middle layers that corresponds to the layers achieving the best classification accuracy. Interestingly, CLIP and SigLIP 2 exhibit two peaks in the number of principal components across the layers. The middle-layers corresponding to the first peak yield the highest time series classification accuracy.

\subsection{Size of ViT backbone}
We report the performance of TiViT with CLIP ViT-H backbone in Section \ref{subsection: hidden representatoins} of the main paper. Table \ref{tab: vit size ablation} provides a detailed analysis of how the performance of TiViT varies with the size of the ViT backbone, including ViT-B (with two patch sizes), ViT-L, and ViT-H. Remarkably, with only 6 Transformer layers from ViT-B, TiViT achieves an accuracy of 80.8\%. While matching the number of Transformer layers in Mantis, TiViT surpasses Mantis (80.1\%) in classification accuracy. However, the hidden dimensionality is higher for the ViT-B backbone used in TiViT. By utilizing a larger backbone, specifically 14 hidden layers of ViT-H/14, we achieve the highest accuracy of 81.3\%, significantly outperforming conventional TSFMs.

\begin{table}
\centering
\caption{Linear classification accuracy of TiViT with varying MAE backbone size and aggregation of hidden representations per layer. We report the total number of layers including the output layer and the index of the best performing layer starting from 0.}
\label{tab: mae ablation}
\begin{tabular}{@{}lccccc@{}}
\toprule
\multirow{2}{*}{Architecture} & \multirow{2}{*}{\# Layers} & \multicolumn{2}{c}{Average of tokens} & \multicolumn{2}{c}{CLS token}\\
\cmidrule(lr){3-4}
\cmidrule(lr){5-6}
&&  Layer & Acc & Layer & Acc \\
\midrule
MAE Base & 13 & 8 & 72.7 & 9 & 73.8 \\
MAE Large & 25 & 14 & 74.3 & 18 & 75.6  \\
MAE Huge & 33 & 20 & 75.9 & 20 & \textbf{76.7}\\
\bottomrule
\end{tabular}
\end{table}

\begin{figure}
\centering
\includegraphics[width=0.6\textwidth]{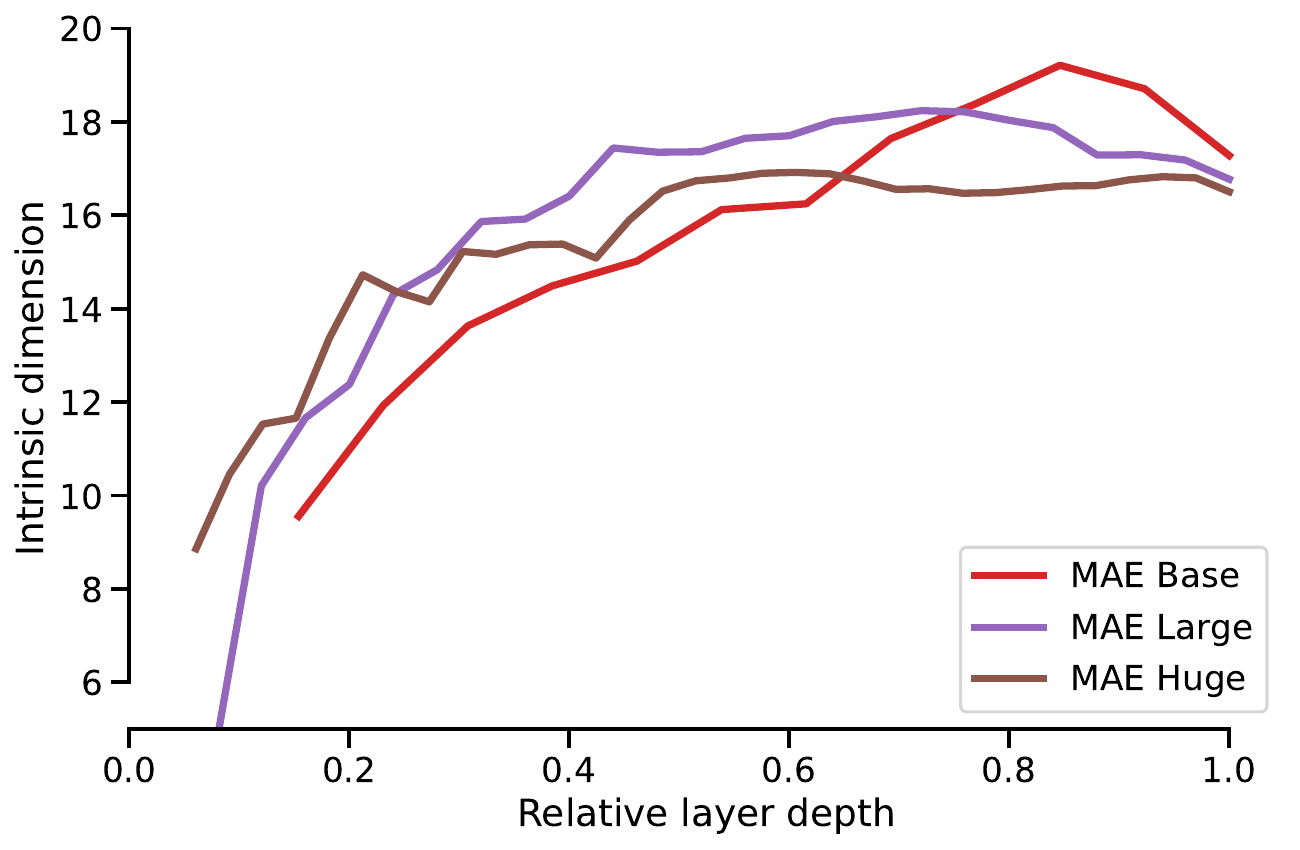}
\caption{Intrinsic dimensionality of CLS tokens per MAE layer averaged across UCR datasets.}
\label{fig: mae intrinsic dim}
\end{figure}

\subsection{Masked autoencoder backbone}
In the main paper, we analyze the reusability of ViT backbones from CLIP \cite{radford2021clip, schuhmann2022laion}, DINOv2 \cite{oquab2024dinov2}, and SigLIP~2~\cite{tschannen2025siglip2} in time series classification. In contrast, \citet{chen2024visionts} repurpose Masked Autoencoders (MAEs)~\cite{he2022masked} for time series forecasting. To enable a direct comparison, we now utilize the hidden representations of MAE Base, Large, and Huge in time series classification.

Our analysis in Table \ref{tab: mae ablation} shows that for MAEs using the CLS token yields better performance in time series classification than averaging token representations. Moreover, Table~\ref{tab: mae ablation} presents a comparison across MAEs of different sizes, showing that larger backbones consistently achieve higher accuracy. Different from contrastively pretrained models, summarized in Table \ref{tab: hidden layers} of the main paper, the best representations for time series classification with MAE lie in later layers. We further observe that the hidden representations of the later MAE layers up to the output layer perform similar in time series classification, while there is a significant gap between hidden representations and output representations for TiViT-CLIP (see Figure \ref{fig: layer vs accuracy} in the main paper). Figure \ref{fig: mae intrinsic dim} illustrates the intrinsic dimension of the CLS tokens per layer averaged across the UCR datasets. We observe that the intrinsic dimension increases up to 60\% of the layer depth, while the later layers mostly exhibit a similar intrinsic dimension, explaining their similar classification performance.

It is worth noting that MAE has only been pretrained on ImageNet-1k \cite{deng2009imagenet} with 1.5 million samples, whereas CLIP has been pretrained on the significantly larger LAION-2B \cite{schuhmann2022laion} dataset with 2 billion samples. We hypothesize that being exposed to a larger set of images during training enhances the capacity of a vision model to extract discriminative patterns from 2D time series representations.

\subsection{Classifiers}
In most of our experiments, we employ a linear classifier (logistic regression) to evaluate the representations of TiViT and Mantis. In Section \ref{subsection: ablation studies}, we additionally assess zero-shot performance using a nearest centroid classifier. Here, we further adopt a random forest classifier following \citet{feofanov2025mantis}. Table \ref{tab: random forest} presents the classification accuracy for TiViT and TSFMs under this evaluation protocol on the UCR dataset. We observe that TiViT performs on par with Mantis, and that combining the representation of both models achieves state-of-the-art classification performance. \citet{feofanov2025mantis} have demonstrated that Mantis surpasses other TSFMs such as NuTime \cite{lin2023nutime} when evaluated with a random forest classifier. This conclusion can now be extended to TiViT.

\begin{figure}
    \centering
    \begin{subfigure}[b]{0.47\textwidth}
        \centering
        \includegraphics[width=\textwidth]{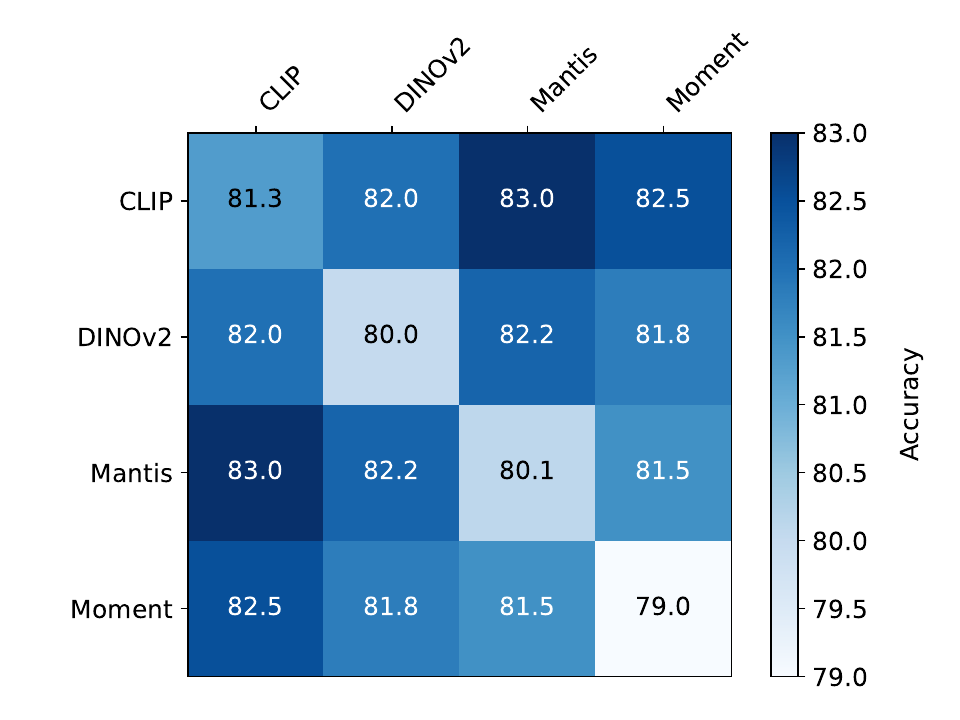}
        \caption{Pairwise joint classification accuracy.}
        \label{fig: accuracy heatmap}
    \end{subfigure}
    \hfill
    \begin{subfigure}[b]{0.47\textwidth}
        \centering
        \includegraphics[width=\textwidth]{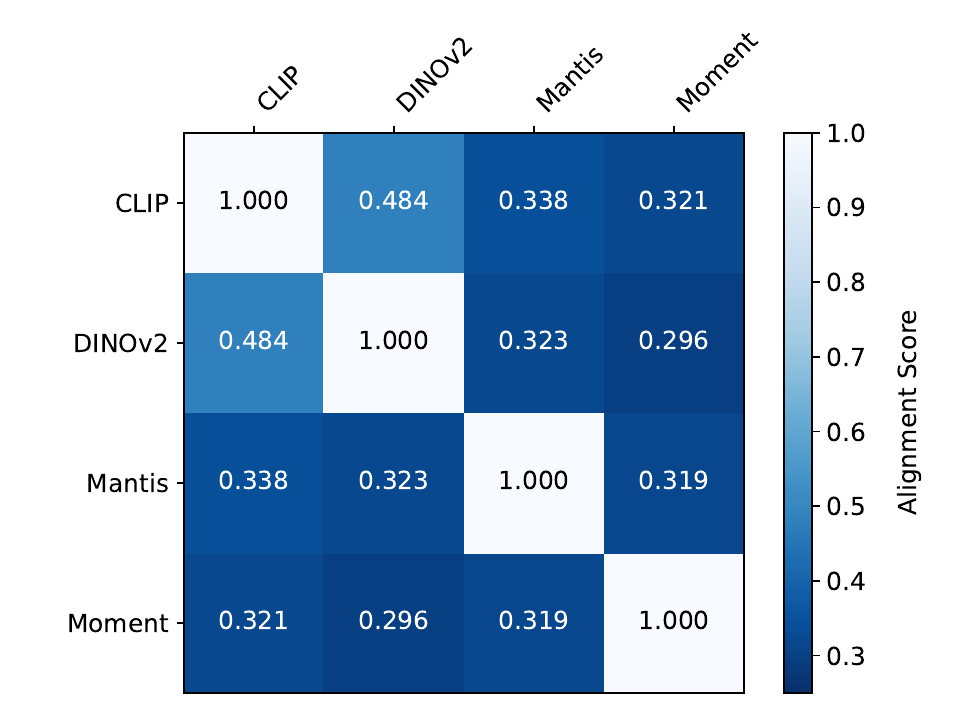}
        \caption{Pairwise alignment score (mutual kNN).}
        \label{fig: alignment heatmap}
    \end{subfigure}
    \caption{The representations of frozen ViTs and TSFMs are concatenated and used in linear classification. Results are averaged over 128 datasets from the UCR benchmark.} 
    \label{fig:joint accuracy and alignment}
\end{figure}

\begin{table}
\centering
\caption{Classification with random forest on the UCR dataset.}
\label{tab: random forest}
\begin{tabular}{@{}lccc@{}}
\toprule
Model & Accuracy\\
\midrule
Moment & 75.6 \\
Mantis & 77.5 \\
\midrule
TiViT & 77.4 \\
\midrule
TiViT + Moment & 79.4 \\
TiViT + Mantis & \textbf{79.8} \\
\bottomrule
\end{tabular}
\end{table}

\subsection{Alignment and fusion of TiViT and TSFM representations}

In Table \ref{fig: joint classification and alignment} of our main paper, we report the alignment and joint classification accuracy for TiViT and TSFMs. Figure \ref{fig:joint accuracy and alignment} is an additional visualization of the pairwise scores as heatmaps.

\newpage
\section{Detailed results on UCR and UEA benchmarks}
\label{app: ucr uea full benchmark}

In the main paper, we report the average accuracy of TiViT and TSFM across 128 univariate datasets from the UCR archive and 27 multivariate datasets from the UEA archive. Here, we report the full linear classification benchmark with accuracy scores for Mantis, Moment, TiViT, and their combinations on each dataset.
Table~\ref{tab:detailed-ucr-benchmark} presents the performance on the UCR dataset, while Table~\ref{tab: detailed uea benchmark} reports the results on the UEA dataset. Additionally, Table \ref{tab: mean rank} provides the mean rank of all five methods on both benchmarks. If multiple element share the same rank, we assign them the lowest rank in the group.

\tiny
\begin{longtable}[c]{lccccc}
\caption{Classification accuracy for 128 univariate datasets from the UCR benchmark. We report the mean and standard deviation across three random seeds.\label{tab:detailed-ucr-benchmark}}\\
\toprule
Dataset & Moment & Mantis & TiViT & TiViT + Moment & TiViT + Mantis \\
\midrule
\endfirsthead

\toprule
\multicolumn{6}{c}{Continuation of Table \ref{tab:detailed-ucr-benchmark}}\\
\midrule
Dataset & Moment & Mantis & TiViT & TiViT + Moment & TiViT + Mantis \\
\midrule
\endhead

\midrule
\endfoot

\midrule
\multicolumn{6}{c}{End of Table}\\
\bottomrule
\endlastfoot

ACSF1 & 0.673 $\pm$ 0.012 & 0.667 $\pm$ 0.021 & \textbf{0.777} $\pm$ 0.015 & \textbf{0.777} $\pm$ 0.012 & 0.763 $\pm$ 0.021 \\
Adiac & 0.731 $\pm$ 0.003 & 0.728 $\pm$ 0.010 & 0.695 $\pm$ 0.017 & \textbf{0.740} $\pm$ 0.005 & 0.714 $\pm$ 0.003 \\
AllGestureWiimoteX & 0.680 $\pm$ 0.004 & 0.666 $\pm$ 0.007 & 0.653 $\pm$ 0.016 & \textbf{0.702} $\pm$ 0.002 & 0.673 $\pm$ 0.019 \\
AllGestureWiimoteY & 0.711 $\pm$ 0.024 & 0.699 $\pm$ 0.007 & 0.715 $\pm$ 0.010 & 0.733 $\pm$ 0.013 & \textbf{0.740} $\pm$ 0.010 \\
AllGestureWiimoteZ & 0.583 $\pm$ 0.013 & 0.650 $\pm$ 0.004 & 0.649 $\pm$ 0.017 & 0.664 $\pm$ 0.011 & \textbf{0.667} $\pm$ 0.019 \\
ArrowHead & 0.804 $\pm$ 0.012 & 0.745 $\pm$ 0.007 & 0.806 $\pm$ 0.045 & \textbf{0.840} $\pm$ 0.023 & 0.825 $\pm$ 0.035 \\
BME & 0.900 $\pm$ 0.075 & 0.987 $\pm$ 0.012 & \textbf{0.998} $\pm$ 0.004 & 0.987 $\pm$ 0.018 & 0.996 $\pm$ 0.008 \\
Beef & \textbf{0.756} $\pm$ 0.038 & 0.700 $\pm$ 0.033 & 0.733 $\pm$ 0.033 & \textbf{0.756} $\pm$ 0.038 & 0.733 $\pm$ 0.033 \\
BeetleFly & 0.833 $\pm$ 0.029 & 0.900 $\pm$ 0.000 & 0.900 $\pm$ 0.050 & 0.883 $\pm$ 0.029 & \textbf{0.933} $\pm$ 0.029 \\
BirdChicken & 0.850 $\pm$ 0.087 & \textbf{0.933} $\pm$ 0.076 & 0.850 $\pm$ 0.087 & 0.850 $\pm$ 0.087 & 0.850 $\pm$ 0.087 \\
CBF & 0.943 $\pm$ 0.012 & 0.994 $\pm$ 0.010 & \textbf{0.999} $\pm$ 0.001 & 0.998 $\pm$ 0.003 & \textbf{0.999} $\pm$ 0.001 \\
Car & 0.817 $\pm$ 0.000 & 0.794 $\pm$ 0.051 & 0.794 $\pm$ 0.010 & 0.806 $\pm$ 0.025 & \textbf{0.822} $\pm$ 0.025 \\
Chinatown & 0.966 $\pm$ 0.009 & 0.962 $\pm$ 0.003 & 0.965 $\pm$ 0.009 & \textbf{0.976} $\pm$ 0.012 & 0.970 $\pm$ 0.007 \\
ChlorineConcentration & 0.723 $\pm$ 0.001 & 0.643 $\pm$ 0.004 & 0.721 $\pm$ 0.011 & \textbf{0.739} $\pm$ 0.016 & 0.737 $\pm$ 0.009 \\
CinCECGTorso & 0.733 $\pm$ 0.031 & 0.737 $\pm$ 0.004 & \textbf{0.895} $\pm$ 0.013 & 0.863 $\pm$ 0.019 & \textbf{0.895} $\pm$ 0.012 \\
Coffee & \textbf{1.000} $\pm$ 0.000 & \textbf{1.000} $\pm$ 0.000 & \textbf{1.000} $\pm$ 0.000 & \textbf{1.000} $\pm$ 0.000 & \textbf{1.000} $\pm$ 0.000 \\
Computers & 0.712 $\pm$ 0.036 & 0.735 $\pm$ 0.021 & 0.748 $\pm$ 0.016 & \textbf{0.772} $\pm$ 0.024 & 0.767 $\pm$ 0.012 \\
CricketX & 0.706 $\pm$ 0.020 & 0.726 $\pm$ 0.015 & 0.763 $\pm$ 0.010 & 0.755 $\pm$ 0.005 & \textbf{0.766} $\pm$ 0.011 \\
CricketY & 0.693 $\pm$ 0.018 & 0.732 $\pm$ 0.017 & 0.767 $\pm$ 0.011 & \textbf{0.779} $\pm$ 0.007 & 0.777 $\pm$ 0.011 \\
CricketZ & 0.740 $\pm$ 0.016 & 0.721 $\pm$ 0.009 & 0.773 $\pm$ 0.015 & 0.779 $\pm$ 0.012 & \textbf{0.797} $\pm$ 0.017 \\
Crop & 0.709 $\pm$ 0.003 & 0.695 $\pm$ 0.001 & 0.673 $\pm$ 0.003 & \textbf{0.712} $\pm$ 0.002 & 0.707 $\pm$ 0.003 \\
DiatomSizeReduction & 0.900 $\pm$ 0.030 & 0.881 $\pm$ 0.032 & \textbf{0.938} $\pm$ 0.048 & 0.932 $\pm$ 0.049 & \textbf{0.938} $\pm$ 0.048 \\
DistalPhalanxOutlineAgeGroup & 0.743 $\pm$ 0.011 & \textbf{0.746} $\pm$ 0.017 & 0.715 $\pm$ 0.004 & 0.724 $\pm$ 0.011 & 0.700 $\pm$ 0.011 \\
DistalPhalanxOutlineCorrect & \textbf{0.762} $\pm$ 0.017 & 0.728 $\pm$ 0.007 & 0.755 $\pm$ 0.006 & 0.756 $\pm$ 0.014 & 0.743 $\pm$ 0.007 \\
DistalPhalanxTW & 0.643 $\pm$ 0.004 & \textbf{0.698} $\pm$ 0.007 & 0.652 $\pm$ 0.027 & 0.688 $\pm$ 0.011 & 0.640 $\pm$ 0.007 \\
DodgerLoopDay & 0.467 $\pm$ 0.031 & \textbf{0.504} $\pm$ 0.014 & 0.475 $\pm$ 0.022 & 0.500 $\pm$ 0.033 & 0.496 $\pm$ 0.031 \\
DodgerLoopGame & 0.720 $\pm$ 0.051 & 0.749 $\pm$ 0.008 & 0.768 $\pm$ 0.045 & 0.756 $\pm$ 0.053 & \textbf{0.783} $\pm$ 0.040 \\
DodgerLoopWeekend & \textbf{0.971} $\pm$ 0.000 & 0.964 $\pm$ 0.000 & 0.957 $\pm$ 0.000 & 0.969 $\pm$ 0.004 & \textbf{0.971} $\pm$ 0.000 \\
ECG200 & 0.843 $\pm$ 0.006 & \textbf{0.853} $\pm$ 0.012 & 0.837 $\pm$ 0.012 & \textbf{0.853} $\pm$ 0.015 & 0.847 $\pm$ 0.012 \\
ECG5000 & 0.933 $\pm$ 0.005 & 0.924 $\pm$ 0.003 & 0.936 $\pm$ 0.002 & 0.937 $\pm$ 0.002 & \textbf{0.939} $\pm$ 0.002 \\
ECGFiveDays & 0.957 $\pm$ 0.007 & 0.977 $\pm$ 0.004 & 0.983 $\pm$ 0.001 & \textbf{0.995} $\pm$ 0.001 & 0.986 $\pm$ 0.001 \\
EOGHorizontalSignal & 0.561 $\pm$ 0.008 & 0.562 $\pm$ 0.018 & 0.603 $\pm$ 0.014 & 0.644 $\pm$ 0.015 & \textbf{0.649} $\pm$ 0.006 \\
EOGVerticalSignal & 0.463 $\pm$ 0.012 & \textbf{0.507} $\pm$ 0.007 & 0.465 $\pm$ 0.009 & 0.493 $\pm$ 0.014 & 0.491 $\pm$ 0.008 \\
Earthquakes & \textbf{0.722} $\pm$ 0.034 & 0.719 $\pm$ 0.007 & 0.707 $\pm$ 0.015 & 0.717 $\pm$ 0.032 & \textbf{0.722} $\pm$ 0.029 \\
ElectricDevices & 0.631 $\pm$ 0.008 & 0.701 $\pm$ 0.003 & \textbf{0.762} $\pm$ 0.002 & 0.744 $\pm$ 0.005 & 0.751 $\pm$ 0.002 \\
EthanolLevel & \textbf{0.631} $\pm$ 0.010 & 0.439 $\pm$ 0.010 & 0.579 $\pm$ 0.023 & 0.614 $\pm$ 0.007 & 0.583 $\pm$ 0.012 \\
FaceAll & 0.733 $\pm$ 0.014 & \textbf{0.794} $\pm$ 0.010 & 0.745 $\pm$ 0.007 & 0.747 $\pm$ 0.004 & 0.766 $\pm$ 0.006 \\
FaceFour & 0.784 $\pm$ 0.041 & \textbf{0.958} $\pm$ 0.007 & 0.777 $\pm$ 0.093 & 0.811 $\pm$ 0.046 & 0.879 $\pm$ 0.046 \\
FacesUCR & 0.791 $\pm$ 0.009 & 0.886 $\pm$ 0.005 & 0.863 $\pm$ 0.011 & 0.870 $\pm$ 0.011 & \textbf{0.902} $\pm$ 0.009 \\
FiftyWords & 0.727 $\pm$ 0.021 & 0.740 $\pm$ 0.013 & 0.747 $\pm$ 0.011 & 0.767 $\pm$ 0.006 & \textbf{0.777} $\pm$ 0.012 \\
Fish & 0.947 $\pm$ 0.003 & 0.958 $\pm$ 0.007 & 0.949 $\pm$ 0.006 & 0.958 $\pm$ 0.012 & \textbf{0.970} $\pm$ 0.009 \\
FordA & 0.914 $\pm$ 0.003 & 0.911 $\pm$ 0.002 & 0.909 $\pm$ 0.004 & \textbf{0.928} $\pm$ 0.005 & 0.914 $\pm$ 0.005 \\
FordB & 0.800 $\pm$ 0.005 & 0.769 $\pm$ 0.002 & \textbf{0.801} $\pm$ 0.004 & 0.796 $\pm$ 0.011 & 0.795 $\pm$ 0.005 \\
FreezerRegularTrain & 0.973 $\pm$ 0.012 & 0.976 $\pm$ 0.012 & \textbf{0.995} $\pm$ 0.001 & \textbf{0.995} $\pm$ 0.004 & \textbf{0.995} $\pm$ 0.002 \\
FreezerSmallTrain & 0.840 $\pm$ 0.012 & 0.870 $\pm$ 0.020 & \textbf{0.981} $\pm$ 0.004 & 0.970 $\pm$ 0.008 & 0.980 $\pm$ 0.005 \\
Fungi & 0.753 $\pm$ 0.033 & 0.810 $\pm$ 0.025 & 0.794 $\pm$ 0.020 & 0.810 $\pm$ 0.020 & \textbf{0.815} $\pm$ 0.025 \\
GestureMidAirD1 & 0.656 $\pm$ 0.012 & 0.669 $\pm$ 0.023 & 0.726 $\pm$ 0.025 & 0.721 $\pm$ 0.018 & \textbf{0.756} $\pm$ 0.031 \\
GestureMidAirD2 & 0.567 $\pm$ 0.016 & 0.574 $\pm$ 0.032 & 0.646 $\pm$ 0.043 & 0.628 $\pm$ 0.019 & \textbf{0.669} $\pm$ 0.028 \\
GestureMidAirD3 & 0.359 $\pm$ 0.019 & 0.385 $\pm$ 0.013 & 0.474 $\pm$ 0.009 & 0.441 $\pm$ 0.018 & \textbf{0.479} $\pm$ 0.035 \\
GesturePebbleZ1 & 0.893 $\pm$ 0.015 & 0.911 $\pm$ 0.003 & 0.891 $\pm$ 0.003 & 0.924 $\pm$ 0.010 & \textbf{0.932} $\pm$ 0.007 \\
GesturePebbleZ2 & 0.846 $\pm$ 0.018 & \textbf{0.905} $\pm$ 0.006 & 0.835 $\pm$ 0.011 & 0.876 $\pm$ 0.032 & 0.892 $\pm$ 0.011 \\
GunPoint & 0.984 $\pm$ 0.027 & 0.987 $\pm$ 0.007 & 0.991 $\pm$ 0.004 & \textbf{0.993} $\pm$ 0.007 & \textbf{0.993} $\pm$ 0.007 \\
GunPointAgeSpan & 0.980 $\pm$ 0.008 & \textbf{0.998} $\pm$ 0.002 & 0.997 $\pm$ 0.000 & 0.995 $\pm$ 0.002 & 0.997 $\pm$ 0.000 \\
GunPointMaleVersusFemale & \textbf{1.000} $\pm$ 0.000 & 0.999 $\pm$ 0.002 & \textbf{1.000} $\pm$ 0.000 & \textbf{1.000} $\pm$ 0.000 & \textbf{1.000} $\pm$ 0.000 \\
GunPointOldVersusYoung & \textbf{1.000} $\pm$ 0.000 & \textbf{1.000} $\pm$ 0.000 & 0.989 $\pm$ 0.004 & \textbf{1.000} $\pm$ 0.000 & \textbf{1.000} $\pm$ 0.000 \\
Ham & \textbf{0.752} $\pm$ 0.025 & 0.667 $\pm$ 0.010 & 0.698 $\pm$ 0.049 & 0.730 $\pm$ 0.048 & 0.740 $\pm$ 0.044 \\
HandOutlines & 0.930 $\pm$ 0.007 & 0.931 $\pm$ 0.006 & 0.936 $\pm$ 0.004 & \textbf{0.942} $\pm$ 0.006 & 0.931 $\pm$ 0.004 \\
Haptics & 0.491 $\pm$ 0.026 & 0.462 $\pm$ 0.002 & 0.487 $\pm$ 0.027 & 0.521 $\pm$ 0.033 & \textbf{0.523} $\pm$ 0.022 \\
Herring & \textbf{0.698} $\pm$ 0.018 & 0.682 $\pm$ 0.024 & 0.615 $\pm$ 0.018 & 0.620 $\pm$ 0.039 & 0.635 $\pm$ 0.033 \\
HouseTwenty & 0.947 $\pm$ 0.010 & 0.961 $\pm$ 0.010 & \textbf{0.980} $\pm$ 0.005 & 0.975 $\pm$ 0.008 & \textbf{0.980} $\pm$ 0.005 \\
InlineSkate & 0.364 $\pm$ 0.019 & 0.334 $\pm$ 0.021 & 0.393 $\pm$ 0.008 & \textbf{0.403} $\pm$ 0.005 & 0.396 $\pm$ 0.008 \\
InsectEPGRegularTrain & 0.987 $\pm$ 0.014 & \textbf{1.000} $\pm$ 0.000 & 0.997 $\pm$ 0.005 & \textbf{1.000} $\pm$ 0.000 & \textbf{1.000} $\pm$ 0.000 \\
InsectEPGSmallTrain & 0.953 $\pm$ 0.008 & \textbf{1.000} $\pm$ 0.000 & 0.985 $\pm$ 0.008 & 0.981 $\pm$ 0.014 & \textbf{1.000} $\pm$ 0.000 \\
InsectWingbeatSound & 0.539 $\pm$ 0.003 & 0.469 $\pm$ 0.019 & 0.524 $\pm$ 0.016 & \textbf{0.553} $\pm$ 0.010 & 0.531 $\pm$ 0.013 \\
ItalyPowerDemand & \textbf{0.938} $\pm$ 0.005 & 0.911 $\pm$ 0.007 & 0.928 $\pm$ 0.015 & 0.937 $\pm$ 0.013 & 0.928 $\pm$ 0.014 \\
LargeKitchenAppliances & 0.859 $\pm$ 0.005 & 0.820 $\pm$ 0.010 & 0.880 $\pm$ 0.012 & \textbf{0.884} $\pm$ 0.014 & 0.874 $\pm$ 0.009 \\
Lightning2 & 0.760 $\pm$ 0.041 & 0.781 $\pm$ 0.025 & 0.820 $\pm$ 0.000 & \textbf{0.836} $\pm$ 0.016 & \textbf{0.836} $\pm$ 0.033 \\
Lightning7 & 0.836 $\pm$ 0.036 & 0.749 $\pm$ 0.021 & 0.836 $\pm$ 0.014 & \textbf{0.868} $\pm$ 0.008 & 0.845 $\pm$ 0.008 \\
Mallat & 0.915 $\pm$ 0.010 & 0.868 $\pm$ 0.028 & 0.930 $\pm$ 0.033 & \textbf{0.957} $\pm$ 0.017 & 0.939 $\pm$ 0.023 \\
Meat & 0.911 $\pm$ 0.038 & \textbf{0.939} $\pm$ 0.019 & 0.806 $\pm$ 0.019 & 0.900 $\pm$ 0.029 & 0.872 $\pm$ 0.051 \\
MedicalImages & 0.731 $\pm$ 0.003 & 0.705 $\pm$ 0.024 & 0.741 $\pm$ 0.011 & \textbf{0.778} $\pm$ 0.009 & 0.762 $\pm$ 0.013 \\
MelbournePedestrian & \textbf{0.933} $\pm$ 0.004 & 0.908 $\pm$ 0.006 & 0.860 $\pm$ 0.005 & 0.930 $\pm$ 0.005 & 0.920 $\pm$ 0.006 \\
MiddlePhalanxOutlineAgeGroup & 0.481 $\pm$ 0.028 & \textbf{0.563} $\pm$ 0.042 & 0.552 $\pm$ 0.023 & 0.530 $\pm$ 0.023 & 0.550 $\pm$ 0.014 \\
MiddlePhalanxOutlineCorrect & 0.813 $\pm$ 0.028 & \textbf{0.844} $\pm$ 0.007 & 0.784 $\pm$ 0.019 & 0.795 $\pm$ 0.019 & 0.818 $\pm$ 0.019 \\
MiddlePhalanxTW & 0.515 $\pm$ 0.019 & 0.455 $\pm$ 0.019 & \textbf{0.517} $\pm$ 0.004 & 0.498 $\pm$ 0.004 & 0.509 $\pm$ 0.014 \\
MixedShapesRegularTrain & 0.947 $\pm$ 0.002 & 0.956 $\pm$ 0.003 & 0.975 $\pm$ 0.001 & 0.974 $\pm$ 0.001 & \textbf{0.978} $\pm$ 0.001 \\
MixedShapesSmallTrain & 0.876 $\pm$ 0.011 & 0.897 $\pm$ 0.010 & 0.944 $\pm$ 0.006 & 0.935 $\pm$ 0.006 & \textbf{0.947} $\pm$ 0.009 \\
MoteStrain & 0.879 $\pm$ 0.011 & 0.887 $\pm$ 0.015 & 0.899 $\pm$ 0.004 & \textbf{0.922} $\pm$ 0.012 & 0.918 $\pm$ 0.013 \\
NonInvasiveFetalECGThorax1 & 0.918 $\pm$ 0.001 & 0.799 $\pm$ 0.004 & 0.890 $\pm$ 0.008 & \textbf{0.921} $\pm$ 0.005 & 0.887 $\pm$ 0.002 \\
NonInvasiveFetalECGThorax2 & 0.927 $\pm$ 0.002 & 0.817 $\pm$ 0.004 & 0.915 $\pm$ 0.003 & \textbf{0.933} $\pm$ 0.002 & 0.918 $\pm$ 0.003 \\
OSULeaf & 0.920 $\pm$ 0.009 & 0.902 $\pm$ 0.006 & \textbf{0.988} $\pm$ 0.007 & 0.986 $\pm$ 0.005 & 0.985 $\pm$ 0.002 \\
OliveOil & 0.889 $\pm$ 0.019 & \textbf{0.944} $\pm$ 0.019 & 0.700 $\pm$ 0.033 & 0.856 $\pm$ 0.019 & 0.789 $\pm$ 0.051 \\
PLAID & 0.741 $\pm$ 0.005 & 0.819 $\pm$ 0.005 & 0.911 $\pm$ 0.005 & 0.901 $\pm$ 0.007 & \textbf{0.929} $\pm$ 0.007 \\
PhalangesOutlinesCorrect & \textbf{0.800} $\pm$ 0.004 & 0.796 $\pm$ 0.006 & 0.789 $\pm$ 0.005 & \textbf{0.800} $\pm$ 0.012 & 0.794 $\pm$ 0.005 \\
Phoneme & 0.276 $\pm$ 0.014 & 0.294 $\pm$ 0.013 & 0.377 $\pm$ 0.008 & 0.377 $\pm$ 0.009 & \textbf{0.386} $\pm$ 0.011 \\
PickupGestureWiimoteZ & 0.760 $\pm$ 0.040 & 0.807 $\pm$ 0.012 & 0.853 $\pm$ 0.042 & 0.840 $\pm$ 0.060 & \textbf{0.887} $\pm$ 0.042 \\
PigAirwayPressure & 0.117 $\pm$ 0.017 & 0.579 $\pm$ 0.012 & 0.535 $\pm$ 0.011 & 0.474 $\pm$ 0.007 & \textbf{0.612} $\pm$ 0.032 \\
PigArtPressure & 0.750 $\pm$ 0.019 & 0.811 $\pm$ 0.015 & 0.798 $\pm$ 0.024 & 0.808 $\pm$ 0.021 & \textbf{0.845} $\pm$ 0.024 \\
PigCVP & 0.723 $\pm$ 0.018 & \textbf{0.777} $\pm$ 0.012 & 0.670 $\pm$ 0.028 & 0.734 $\pm$ 0.012 & \textbf{0.777} $\pm$ 0.007 \\
Plane & \textbf{1.000} $\pm$ 0.000 & \textbf{1.000} $\pm$ 0.000 & \textbf{1.000} $\pm$ 0.000 & \textbf{1.000} $\pm$ 0.000 & \textbf{1.000} $\pm$ 0.000 \\
PowerCons & 0.930 $\pm$ 0.012 & 0.941 $\pm$ 0.017 & 0.898 $\pm$ 0.006 & \textbf{0.952} $\pm$ 0.014 & 0.915 $\pm$ 0.003 \\
ProximalPhalanxOutlineAgeGroup & 0.800 $\pm$ 0.015 & \textbf{0.850} $\pm$ 0.014 & 0.837 $\pm$ 0.007 & 0.833 $\pm$ 0.010 & 0.837 $\pm$ 0.012 \\
ProximalPhalanxOutlineCorrect & 0.875 $\pm$ 0.010 & \textbf{0.885} $\pm$ 0.005 & 0.861 $\pm$ 0.008 & 0.877 $\pm$ 0.002 & 0.875 $\pm$ 0.005 \\
ProximalPhalanxTW & \textbf{0.751} $\pm$ 0.013 & 0.727 $\pm$ 0.013 & 0.740 $\pm$ 0.007 & 0.738 $\pm$ 0.010 & 0.740 $\pm$ 0.010 \\
RefrigerationDevices & 0.520 $\pm$ 0.023 & 0.517 $\pm$ 0.014 & \textbf{0.568} $\pm$ 0.019 & 0.552 $\pm$ 0.023 & 0.564 $\pm$ 0.029 \\
Rock & 0.640 $\pm$ 0.087 & 0.607 $\pm$ 0.110 & 0.833 $\pm$ 0.099 & 0.807 $\pm$ 0.095 & \textbf{0.840} $\pm$ 0.106 \\
ScreenType & 0.477 $\pm$ 0.018 & 0.465 $\pm$ 0.013 & 0.523 $\pm$ 0.012 & 0.542 $\pm$ 0.019 & \textbf{0.548} $\pm$ 0.006 \\
SemgHandGenderCh2 & 0.742 $\pm$ 0.010 & 0.877 $\pm$ 0.010 & 0.877 $\pm$ 0.008 & 0.866 $\pm$ 0.013 & \textbf{0.916} $\pm$ 0.010 \\
SemgHandMovementCh2 & 0.414 $\pm$ 0.019 & 0.657 $\pm$ 0.012 & 0.547 $\pm$ 0.005 & 0.533 $\pm$ 0.007 & \textbf{0.692} $\pm$ 0.009 \\
SemgHandSubjectCh2 & 0.662 $\pm$ 0.002 & 0.834 $\pm$ 0.013 & 0.840 $\pm$ 0.002 & 0.819 $\pm$ 0.006 & \textbf{0.884} $\pm$ 0.008 \\
ShakeGestureWiimoteZ & 0.907 $\pm$ 0.031 & 0.907 $\pm$ 0.012 & 0.840 $\pm$ 0.035 & \textbf{0.913} $\pm$ 0.012 & 0.867 $\pm$ 0.012 \\
ShapeletSim & 0.963 $\pm$ 0.006 & 0.924 $\pm$ 0.008 & \textbf{1.000} $\pm$ 0.000 & \textbf{1.000} $\pm$ 0.000 & \textbf{1.000} $\pm$ 0.000 \\
ShapesAll & 0.893 $\pm$ 0.008 & 0.851 $\pm$ 0.007 & 0.899 $\pm$ 0.003 & \textbf{0.915} $\pm$ 0.002 & 0.909 $\pm$ 0.002 \\
SmallKitchenAppliances & 0.720 $\pm$ 0.012 & 0.784 $\pm$ 0.012 & \textbf{0.815} $\pm$ 0.015 & \textbf{0.815} $\pm$ 0.019 & 0.808 $\pm$ 0.017 \\
SmoothSubspace & 0.891 $\pm$ 0.020 & \textbf{0.976} $\pm$ 0.004 & \textbf{0.976} $\pm$ 0.014 & 0.967 $\pm$ 0.007 & \textbf{0.976} $\pm$ 0.010 \\
SonyAIBORobotSurface1 & 0.829 $\pm$ 0.015 & \textbf{0.881} $\pm$ 0.027 & 0.845 $\pm$ 0.021 & 0.840 $\pm$ 0.020 & 0.854 $\pm$ 0.019 \\
SonyAIBORobotSurface2 & 0.829 $\pm$ 0.032 & 0.876 $\pm$ 0.032 & 0.901 $\pm$ 0.028 & 0.904 $\pm$ 0.044 & \textbf{0.910} $\pm$ 0.024 \\
StarLightCurves & 0.969 $\pm$ 0.001 & 0.969 $\pm$ 0.000 & 0.974 $\pm$ 0.001 & \textbf{0.976} $\pm$ 0.001 & \textbf{0.976} $\pm$ 0.001 \\
Strawberry & \textbf{0.972} $\pm$ 0.002 & 0.959 $\pm$ 0.003 & 0.958 $\pm$ 0.002 & 0.968 $\pm$ 0.010 & 0.964 $\pm$ 0.004 \\
SwedishLeaf & 0.919 $\pm$ 0.011 & 0.939 $\pm$ 0.004 & 0.953 $\pm$ 0.001 & \textbf{0.960} $\pm$ 0.002 & 0.958 $\pm$ 0.001 \\
Symbols & 0.965 $\pm$ 0.006 & 0.984 $\pm$ 0.002 & \textbf{0.987} $\pm$ 0.000 & 0.986 $\pm$ 0.000 & 0.986 $\pm$ 0.001 \\
SyntheticControl & 0.967 $\pm$ 0.006 & 0.989 $\pm$ 0.004 & 0.999 $\pm$ 0.002 & 0.996 $\pm$ 0.004 & \textbf{1.000} $\pm$ 0.000 \\
ToeSegmentation1 & 0.953 $\pm$ 0.022 & \textbf{0.968} $\pm$ 0.013 & 0.923 $\pm$ 0.009 & 0.950 $\pm$ 0.015 & 0.952 $\pm$ 0.008 \\
ToeSegmentation2 & 0.897 $\pm$ 0.016 & \textbf{0.962} $\pm$ 0.008 & 0.913 $\pm$ 0.016 & 0.913 $\pm$ 0.009 & 0.923 $\pm$ 0.008 \\
Trace & \textbf{1.000} $\pm$ 0.000 & \textbf{1.000} $\pm$ 0.000 & \textbf{1.000} $\pm$ 0.000 & \textbf{1.000} $\pm$ 0.000 & \textbf{1.000} $\pm$ 0.000 \\
TwoLeadECG & 0.916 $\pm$ 0.020 & 0.997 $\pm$ 0.001 & \textbf{1.000} $\pm$ 0.000 & 0.999 $\pm$ 0.001 & \textbf{1.000} $\pm$ 0.000 \\
TwoPatterns & 0.989 $\pm$ 0.001 & 0.949 $\pm$ 0.003 & \textbf{0.998} $\pm$ 0.001 & \textbf{0.998} $\pm$ 0.000 & 0.997 $\pm$ 0.001 \\
UMD & \textbf{0.993} $\pm$ 0.000 & 0.988 $\pm$ 0.008 & \textbf{0.993} $\pm$ 0.000 & \textbf{0.993} $\pm$ 0.000 & \textbf{0.993} $\pm$ 0.000 \\
UWaveGestureLibraryAll & 0.924 $\pm$ 0.001 & 0.872 $\pm$ 0.004 & 0.937 $\pm$ 0.002 & \textbf{0.948} $\pm$ 0.003 & 0.944 $\pm$ 0.001 \\
UWaveGestureLibraryX & 0.793 $\pm$ 0.003 & 0.778 $\pm$ 0.009 & 0.825 $\pm$ 0.002 & 0.836 $\pm$ 0.005 & \textbf{0.838} $\pm$ 0.003 \\
UWaveGestureLibraryY & 0.708 $\pm$ 0.010 & 0.677 $\pm$ 0.009 & 0.755 $\pm$ 0.002 & \textbf{0.765} $\pm$ 0.002 & 0.764 $\pm$ 0.002 \\
UWaveGestureLibraryZ & 0.729 $\pm$ 0.005 & 0.737 $\pm$ 0.005 & 0.761 $\pm$ 0.006 & 0.773 $\pm$ 0.010 & \textbf{0.788} $\pm$ 0.005 \\
Wafer & 0.992 $\pm$ 0.002 & 0.996 $\pm$ 0.000 & \textbf{0.999} $\pm$ 0.000 & \textbf{0.999} $\pm$ 0.000 & \textbf{0.999} $\pm$ 0.000 \\
Wine & \textbf{0.901} $\pm$ 0.028 & 0.833 $\pm$ 0.037 & 0.673 $\pm$ 0.057 & 0.759 $\pm$ 0.037 & 0.759 $\pm$ 0.032 \\
WordSynonyms & 0.644 $\pm$ 0.017 & 0.623 $\pm$ 0.016 & 0.643 $\pm$ 0.017 & \textbf{0.677} $\pm$ 0.020 & 0.675 $\pm$ 0.028 \\
Worms & 0.749 $\pm$ 0.033 & 0.697 $\pm$ 0.037 & 0.753 $\pm$ 0.047 & \textbf{0.805} $\pm$ 0.022 & 0.784 $\pm$ 0.067 \\
WormsTwoClass & 0.775 $\pm$ 0.037 & 0.740 $\pm$ 0.000 & 0.775 $\pm$ 0.033 & 0.784 $\pm$ 0.040 & \textbf{0.805} $\pm$ 0.022 \\
Yoga & 0.833 $\pm$ 0.008 & 0.771 $\pm$ 0.014 & 0.819 $\pm$ 0.005 & \textbf{0.841} $\pm$ 0.006 & 0.838 $\pm$ 0.006 \\
\end{longtable}

\begin{table}[h]
\scriptsize
\centering
\caption{Classification accuracy for 27 multivariate datasets from the UEA benchmark. We report the mean and standard deviation across three random seeds.}
\label{tab: detailed uea benchmark}
\begin{tabular}{lccccc}
\toprule
Dataset & Moment & Mantis & TiViT & TiViT + Moment & TiViT + Mantis \\
\midrule
ArticularyWordRecognition & 0.988 $\pm$ 0.002 & \textbf{0.991} $\pm$ 0.002 & 0.977 $\pm$ 0.003 & 0.977 $\pm$ 0.003 & 0.974 $\pm$ 0.005 \\
BasicMotions & \textbf{1.000} $\pm$ 0.000 & \textbf{1.000} $\pm$ 0.000 & \textbf{1.000} $\pm$ 0.000 & \textbf{1.000} $\pm$ 0.000 & \textbf{1.000} $\pm$ 0.000 \\
CharacterTrajectories & \textbf{0.982} $\pm$ 0.001 & 0.973 $\pm$ 0.001 & 0.964 $\pm$ 0.005 & \textbf{0.982} $\pm$ 0.001 & 0.978 $\pm$ 0.005 \\
Cricket & \textbf{1.000} $\pm$ 0.000 & 0.986 $\pm$ 0.000 & \textbf{1.000} $\pm$ 0.000 & \textbf{1.000} $\pm$ 0.000 & \textbf{1.000} $\pm$ 0.000 \\
DuckDuckGeese & \textbf{0.467} $\pm$ 0.081 & 0.433 $\pm$ 0.023 & 0.393 $\pm$ 0.081 & 0.413 $\pm$ 0.064 & 0.433 $\pm$ 0.050 \\
ERing & 0.895 $\pm$ 0.022 & 0.905 $\pm$ 0.025 & 0.975 $\pm$ 0.014 & 0.977 $\pm$ 0.006 & \textbf{0.981} $\pm$ 0.007 \\
EigenWorms & 0.746 $\pm$ 0.022 & 0.746 $\pm$ 0.016 & \textbf{0.911} $\pm$ 0.016 & 0.880 $\pm$ 0.009 & \textbf{0.911} $\pm$ 0.012 \\
Epilepsy & \textbf{1.000} $\pm$ 0.000 & 0.990 $\pm$ 0.004 & \textbf{1.000} $\pm$ 0.000 & \textbf{1.000} $\pm$ 0.000 & \textbf{1.000} $\pm$ 0.000 \\
EthanolConcentration & 0.445 $\pm$ 0.013 & 0.269 $\pm$ 0.044 & \textbf{0.485} $\pm$ 0.012 & 0.473 $\pm$ 0.030 & 0.465 $\pm$ 0.019 \\
FaceDetection & 0.584 $\pm$ 0.007 & 0.592 $\pm$ 0.006 & 0.598 $\pm$ 0.004 & 0.584 $\pm$ 0.007 & \textbf{0.607} $\pm$ 0.005 \\
FingerMovements & \textbf{0.633} $\pm$ 0.045 & 0.593 $\pm$ 0.025 & 0.517 $\pm$ 0.040 & 0.620 $\pm$ 0.036 & 0.553 $\pm$ 0.050 \\
HandMovementDirection & \textbf{0.279} $\pm$ 0.051 & 0.212 $\pm$ 0.021 & 0.275 $\pm$ 0.016 & 0.257 $\pm$ 0.036 & 0.257 $\pm$ 0.027 \\
Handwriting & 0.296 $\pm$ 0.018 & \textbf{0.425} $\pm$ 0.013 & 0.307 $\pm$ 0.034 & 0.340 $\pm$ 0.002 & 0.385 $\pm$ 0.021 \\
Heartbeat & 0.735 $\pm$ 0.007 & \textbf{0.800} $\pm$ 0.017 & 0.732 $\pm$ 0.008 & 0.717 $\pm$ 0.022 & 0.769 $\pm$ 0.003 \\
InsectWingbeat & 0.231 $\pm$ 0.012 & \textbf{0.573} $\pm$ 0.017 & 0.355 $\pm$ 0.008 & 0.332 $\pm$ 0.018 & 0.443 $\pm$ 0.020 \\
JapaneseVowels & 0.918 $\pm$ 0.006 & \textbf{0.978} $\pm$ 0.003 & 0.940 $\pm$ 0.002 & 0.938 $\pm$ 0.012 & 0.933 $\pm$ 0.008 \\
LSST & 0.571 $\pm$ 0.005 & 0.607 $\pm$ 0.009 & 0.604 $\pm$ 0.005 & 0.610 $\pm$ 0.009 & \textbf{0.652} $\pm$ 0.003 \\
Libras & 0.861 $\pm$ 0.017 & 0.887 $\pm$ 0.026 & 0.907 $\pm$ 0.006 & \textbf{0.922} $\pm$ 0.022 & 0.920 $\pm$ 0.018 \\
MotorImagery & 0.530 $\pm$ 0.026 & \textbf{0.563} $\pm$ 0.012 & \textbf{0.563} $\pm$ 0.049 & 0.560 $\pm$ 0.044 & 0.553 $\pm$ 0.042 \\
NATOPS & 0.900 $\pm$ 0.029 & \textbf{0.931} $\pm$ 0.014 & 0.869 $\pm$ 0.006 & 0.889 $\pm$ 0.006 & 0.878 $\pm$ 0.006 \\
PEMS-SF & 0.705 $\pm$ 0.029 & \textbf{0.788} $\pm$ 0.029 & 0.709 $\pm$ 0.084 & 0.763 $\pm$ 0.044 & 0.742 $\pm$ 0.087 \\
PhonemeSpectra & 0.186 $\pm$ 0.004 & 0.272 $\pm$ 0.006 & 0.245 $\pm$ 0.007 & 0.265 $\pm$ 0.007 & \textbf{0.286} $\pm$ 0.008 \\
RacketSports & 0.829 $\pm$ 0.007 & \textbf{0.919} $\pm$ 0.004 & 0.846 $\pm$ 0.010 & 0.871 $\pm$ 0.008 & 0.879 $\pm$ 0.027 \\
SelfRegulationSCP1 & 0.762 $\pm$ 0.010 & 0.825 $\pm$ 0.022 & 0.858 $\pm$ 0.008 & 0.840 $\pm$ 0.003 & \textbf{0.891} $\pm$ 0.010 \\
SelfRegulationSCP2 & 0.509 $\pm$ 0.031 & 0.491 $\pm$ 0.018 & \textbf{0.526} $\pm$ 0.038 & 0.506 $\pm$ 0.017 & 0.517 $\pm$ 0.020 \\
SpokenArabicDigits & \textbf{0.981} $\pm$ 0.003 & 0.907 $\pm$ 0.006 & 0.969 $\pm$ 0.001 & 0.979 $\pm$ 0.003 & 0.972 $\pm$ 0.002 \\
UWaveGestureLibrary & 0.846 $\pm$ 0.010 & 0.879 $\pm$ 0.015 & 0.910 $\pm$ 0.005 & 0.902 $\pm$ 0.004 & \textbf{0.919} $\pm$ 0.009 \\
\bottomrule
\end{tabular}
\end{table}

\normalsize
\begin{table}
\centering
\caption{Mean rank of TiViT and TSFMs across datasets from the UCR and UEA archive.}
\label{tab: mean rank}
\begin{tabular}{lcc}
\toprule
Model & UCR & UEA \\
\midrule
Moment & 3.66 & 3.33\\
Mantis & 3.44 & 2.85\\
\midrule
TiViT \emph{(Ours)} & 2.97 & 2.85\\
\midrule
TiViT + Moment \emph{(Ours)} & 2.16 & 2.63\\
TiViT + Mantis \emph{(Ours)} & \textbf{1.92} & \textbf{2.22} \\
\bottomrule
\end{tabular}
\end{table}

\newpage

\section{Broader impacts}
\label{app: broader impacts}
Since this paper presents foundational machine learning research, we do not see any direct societal risks. The broader impact of our work will depend on its specific application.

We demonstrate that our method TiViT significantly improves classification accuracy. This advancement can be beneficial in healthcare where the analysis of physiological signals is crucial for early diagnosis and treatment or in industry where the accurate monitoring of sensor data enables predictive maintenance and reduces downtime.

However, deep learning models including TiViT operate as black boxes with limited interpretability. In safety-critical domains or applications directly impacting humans, such models necessitate careful deployment and oversight. Further research into interpretability and human-in-the-loop frameworks is essential to make deep learning models trustworthy for real-world settings.

\end{document}